\newcommand{\rE}{{\mathbb E}}
\newcommand{\rR}{{\mathbb R}}
\newcommand{\algname}{{Feel-Good}}
\newcommand{\priorname}{{Feel-Good}}
\newcommand{\dc}{\mathrm{dc}}
\newcommand{\cS}{{\mathcal S}}
\newcommand{\cX}{{\mathcal X}}
\newcommand{\cA}{{\mathcal A}}
\newcommand{\cF}{{\mathcal F}}
\newcommand{\reg}{{\mathrm{REG}}}
\newcommand{\BE}{\mathrm{BE}}
\newcommand{\FG}{\mathrm{FG}}
\newcommand{\LS}{\mathrm{LS}}
\newcommand{\cT}{{\mathcal T^\star}}
\newtheorem{assumption}{Assumption}
\title{\algname\ Thompson Sampling for Contextual Bandits and Reinforcement Learning}
\author{Tong Zhang
\thanks{Google Research and the Hong Kong University of Science and Technology}
}
\date{}
\newtheorem{lemma}{Lemma}
\newtheorem{theorem}{Theorem}
\newtheorem{proposition}{Proposition}
\newtheorem{definition}{Definition}
\newcommand{\suppl}{appendix}
\begin{document}

\maketitle

\begin{abstract}
  Thompson Sampling has been widely used for contextual bandit problems due to the flexibility of its modeling power. However, a general theory for this class of methods in the frequentist setting is still lacking.
  In this paper, we present a theoretical analysis of Thompson
  Sampling, with a focus on frequentist regret bounds.
  In this setting, we show that the standard Thompson Sampling is not 
  aggressive enough in exploring new actions, leading to suboptimality in some pessimistic situations.
  A simple modification called
  {\em \algname\ Thompson Sampling}, 
  which favors high reward models more aggressively than the standard Thompson Sampling, is proposed to remedy this problem.
  We show that the theoretical framework can be used to derive 
  Bayesian regret bounds for standard Thompson Sampling, and
  frequentist regret bounds for \algname\ Thompson Sampling.
  It is shown that in both cases, we can reduce  the bandit
  regret problem to online least squares regression estimation.
   For the frequentist analysis, the online least
  squares regression bound can be directly obtained using online
  aggregation techniques which have been well studied.
  The resulting bandit regret bound matches the minimax lower bound in the finite action case.
Moreover, the analysis can be generalized to handle a class of linearly embeddable contextual bandit problems (which generalizes the popular linear contextual bandit model). The obtained result again matches the minimax lower bound.
Finally we illustrate that the analysis can be extended to handle some MDP problems. 
\end{abstract}
\section{Introduction}

This paper considers the contextual bandit problem
\cite{langford2007epoch} as well as its generalization to
contextual reinforcement learning \cite{jiang2017contextual}.
The contextual bandit problem can be regarded as a repeated game between a player (bandit algorithm) and an adversary  as follows: at time $t=1,2,\ldots$
\begin{itemize}
\item The player observes a context $x_t \in \cX$ from the adversary;
\item The player picks an action $a_t \in \cA(x_t)$;
\item The player observes a reward $r_t$.
\end{itemize}
We assume that $\cA(x_t)$ is the set of allowable actions for context $x_t$. 
We also assume that the reward $r_t$ is stochastic which depends only on $(x_t,a_t)$, and 
\begin{equation}
f_*(x_t,a_t) =\rE_{r_t|a_t,x_t} \; r_t  , \label{eq:reward}
\end{equation}
where $f_*$ is an unknown action value function. 
However, we allow an {\em adversarial} opponent who may pick $x_t$ based on the game history 
\[
S_{t-1}= \big[(x_1,a_1,r_1),\ldots, (x_{t-1},a_{t-1},r_{t-1}) \big]
\]
at any time $t$. 
Our goal is to maximize the expected reward 
\[
\rE \; \sum_{s=1}^t f_*(x_s,a_s) ,
\]
where the expectation is with respect to the internal randomization of the algorithm and the randomness in observations.
The maximum expected reward at $x$ is
\[
f_*(x) = \max_{a \in \cA(x)} f_*(x,a) .
\]
The quality of a bandit algorithm is measured by its (frequentist) regret 
\[
\rE \; \sum_{s=1}^t  \reg_s , \qquad \reg_s = \left[ f_*(x_s)
- f_*(x_s,a_s) \right] .
\]
In the theoretical analysis of contextual bandits,
the goal is to obtain regret bounds. 

A particularly important class of algorithms for contextual bandits is
Thompson Sampling \cite{thompson1933likelihood}, which has been widely used in practice with good
empirical performance. However, there is a lack of general frequentist
regret analysis for this class of algorithms.
In the analysis of Thompson Sampling, one often considers another type of regret bound called Bayesian regret. If we assume that
$f_*(x,a)= f(\theta_*,x,a)$ for some $\theta_*$ that is drawn from a
prior distribution $p_0(\theta)$, the Bayesian regret is the averaged frequentist regret
over the prior:
\[
  \rE_{\theta_* \sim p_0} \; \rE \;  \sum_{s=1}^t  \reg_s .
\]

In this paper we develop a theoretical framework to analyze Thompson
Sampling for contextual bandits. The framework introduces a 
decoupling coefficient technique to the reduce regret analysis for
Thompson Sampling into an online least squares estimation problem,
which is in a style similar to \cite{foster2020beyond}. We show this
conversion can be done both for Bayesian regret analysis and for frequentist
regret analysis.
For frequentist regret, we further show that 
an additional exploration term called \algname\ is needed, which
favors models that are optimistic historically.
With this added exploration term, we can employ standard online
aggregation techniques to obtain bounds for the least squares
estimation problem. 
Moreover, it is shown that the analysis can be extended to some settings in reinforcement learning.
Our theoretical framework provides a simple mathematical framework to
analyze Thompson Sampling.

\section{Related Work}
\label{sec:related}

The contextual bandit problem can be regarded as a generalization of multi-armed bandit with side information \cite{langford2007epoch}. It has many practical applications such as online advertising, recommendation systems and mobile health \cite{li2010contextual,agarwal2016making,tewari2017ads}. 
Due to its wide range of applications, there is significant effort in developing algorithms and theoretical analysis for contextual bandit problems. 

In general, contextual bandit algorithms can be characterized into
policy based and value based methods. Policy based methods include
EXP4 \cite{auer2002nonstochastic}, and empirical classification
minimization based methods such as epoch greedy
\cite{langford2007epoch}, and \cite{dudik2011efficient}. However, they
are computationally inefficient for large problems because solving
classification problems can be computationally costly. Moreover, it is
often difficult to generalize such policy algorithms to handle infinite number of actions.

Related to policy based algorithms are value based methods such as various methods for linear bandits \cite{dani2008stochastic,li2010contextual,abbasi2011improved}, which depend on the concept of upper confidence bound (UCB) \cite{auer2002using}. This class of methods involve the solution of linear regression, and can handle infinitely many actions. 
More recently, it was observed that least squares regression oracles (rather than classification oracles) can be used to derive bandit algorithms for finite actions \cite{foster2020beyond,simchi2020bypassing}, and the resulting bounds match the optimal minimax bounds in \cite{agarwal2012contextual} for finite function family. 
While certain infinite action spaces can also be handled
\cite{foster2020adapting}, either the resulting algorithm requires
complex optimization for each test data, or the required structure is
more restrictive than the linear bandit model of
\cite{dani2008stochastic}. The proof technique is different from UCB,
and employs a policy randomization trick of \cite{abe1999associative}
for the derived policy, together with smart variance controls that
depend on the regression oracles. Unfortunately, as noted by the
authors of \cite{foster2020beyond}, their analysis based on policy randomization is difficult to extend to the reinforcement learning setting. 

A very popular class of algorithms for contextual bandits is Thompson Sampling \cite{thompson1933likelihood}, which has been observed to perform well empirically 
\cite{chapelle2011empirical,Osband17-icml}. This class of algorithms
can be regarded as value based, but it employs a different mechanism
than UCB to perform exploration. Thompson Sampling often performs
better than UCB empirically and there are many existing posterior
approximation techniques developed by the Bayesian community that can
be used to sample from the posterior. 
However, its theoretical analysis is rather limited. Although near optimal 
results are known for non-contextual multi-armed bandits 
\cite{agrawal2013further}, it is unclear how well the method works for the general case. 
In fact, even for linear bandits, the results are not optimal \cite{agrawal2013thompson}. 
It is also not known whether Thompson Sampling can achieve the optimal worst case frequentist regret bound for the general contextual bandit problem considered in \cite{foster2020beyond}, although some related results are known for Bayesian regret which averages over a known prior distribution \cite{russo2014learning}.

This paper tries to resolve this open problem. We derive a decoupling
technique which allows us to turn the regret analysis of Thompson Sampling into
online least squares regression bound analysis, in a style motivated
by \cite{foster2020beyond}.
It is shown that it is possible to establish a unified analysis of both
Bayesian regret and frequentist regret for Thompson Sampling. 
For Bayesian regret bounds, the standard Thompson Sampling algorithm is sufficient. However,  for frequentist regret bounds,
we show that the standard Thompson Sampling leads to a suboptimal worst case regret bound. 
To remedy this problem, we introduce an additional term called \priorname\ exploration that encourages optimistic exploration in Thompson Sampling. We show that with this modification,  a frequentist regret bound comparable to that of 
\cite{foster2020beyond} can be obtained for the case of finite function classes with finite action space.
The analysis using the \priorname\ exploration term leads to an exploration mechanism that is different both from UCB and from the policy randomization trick considered by \cite{foster2020beyond}. We note that Thompson Sampling randomizes over value functions (with deterministic greedy policy) instead of over policies as in \cite{foster2020beyond} and EXP4. 
This allows us to generalize the analysis easily to deal with infinite action spaces and reinforcement learning. 

For the case of infinite action space, we introduce a new contextual bandit model called {\em linearly embeddable} bandits, which directly generalizes linear bandits. The model allows a context dependent non-linear embedding of the linear weights, and we show that regret bounds can be obtained for general nonlinear parametric families of embeddings, which match the regret bounds of \cite{dani2008stochastic} for linear bandits. This improves some earlier frequentist regret bounds for Thompson Sampling, which had a suboptimal $O(d^{3/2})$ dependency on the dimensionality $d$ \cite{agrawal2013thompson,abeille2018improved},
to the optimal dependency $O(d)$, matching those of linear UCB style methods  \cite{dani2008stochastic,abbasi2011improved}.
Note that Bayesian regret bounds with $O(d)$ dependency can be
obtained for linear bandits \cite{russo2014learning,russo2016information}.

It is also possible to apply the idea of \priorname\ term and its proof technique
 to the contextual reinforcement learning problem, 
which is a model studied in \cite{jiang2017contextual,CNAAJR18-neurips}. The regret bound we obtain is similar to that of 
\cite{jin2020provably} for linear Markov Decision Process (MDP).
For simplicity, in this work, we only consider the case that MDP
transitions are deterministic, which were investigated by some earlier
work \cite{CNAAJR18-neurips}, and leave the more general case to
future work.
We note that related regret bounds have  been derived for Thompson
Sampling for the tabular MDP case \cite{agrawal2017posterior}, and for
the related method of randomized least squares value iteration
\cite{osband2019deep,zanette2020frequentist,agrawal2020improved}. However,
our results are are different, and allows contextually dependent
nonlinear embeddings of linear MDPs in an adversarial setting.

It is also worth pointing out the general analysis presented in this
paper has its limitations, especially when it is applied to bandit
problems with special structures. For example, even for the simple
case of multi-armed structured bandit problem
\cite{lattimore2014bounded}, regret bounds from this paper may be suboptimal.
The same suboptimality is also present for analysis such as
\cite{foster2020beyond}, which focused on the general nonlinear
 contextual bandits, but failed to obtain bound of the
form $O(\sqrt{d T\log K})$ for $d$ dimensional linear bandits with $K$ arms. That is,
the special structure of linear bandits are not fully utilized in the nonlinear
analysis of \cite{foster2020beyond} and the analysis of the current paper. 
Similarly, results obtained in
this paper do not yield
optimal bounds for some other structured bandit problems such as
latent bandits \cite{maillard2014latent,hong2020latent}.

\section{Thompson Sampling}

In Thompson Sampling, we consider a parameter space $\theta \in
\Omega$, a prior $p_0(\theta)$ and a reward likelihood
with negative log-likelihood $L(\cdot)$
\[
p(r_t|\theta, x_t, a_{t}) \propto \exp( - L(\theta, x_t,a_t,r_t)) .
\]
Each $\theta$ is associated with a function $f(\theta,x,a)$, which is an approximation of the true value function $f_*(x,a)$.
Note that in the Bayesian regret analysis, we assume that the prior and likelihood are both correct, and $f_*(x,a)=f(\theta_*,x,a)$ for some $\theta_*$ drawn from the prior $p_0$. 
In the frequentist regret analysis,  we do not assume that either prior or the likelihood is correct. 

We also define the induced action $a(\theta,x)$, and the value function $f(\theta,x)$ at $x$ according to $\theta$ as follows:
\begin{equation}
        a(\theta,x) \in \arg\max_{a \in \cA(x)} f(\theta,x,a) , \qquad f(\theta,x)=f(\theta,x,a(\theta,x)) = \max_a f(\theta,x,a) . \label{eq:a-determ}
    \end{equation}

In the Bayesian formulation, and assume that the prior is correctly
specified, we can regard the posterior as
\begin{equation}
p(\theta|S_{t-1}) \propto \exp \left(-\sum_{s=1}^{t-1} L(\theta,x_s,a_s,r_s)\right) p_0(\theta) . \label{eq:posterior}
\end{equation}

The Thompson Sampling algorithm does the following at each time step $t$
\begin{itemize}
    \item draw $\theta_t \sim p(\theta|S_{t-1})$;
    \item take action $a_t = a(\theta_t, x_t)$.
\end{itemize}
The resulting algorithm is presented in Algorithm~\ref{alg:ts}. 

\begin{algorithm}[H]
\caption{Thompson Sampling for Contextual Bandits}
\label{alg:ts}
  \begin{algorithmic}[1]
  \FOR{$t=1,2,\ldots,T$}
    \STATE Observe $x_t \in \cX$ 
   \STATE Draw $\theta_t \sim p(\theta|S_{t-1})$ according to \eqref{eq:posterior} 
  \STATE  Let $a_t=a(\theta_t,x_t)$
  \STATE  Observe reward $r_t$
  \ENDFOR
\end{algorithmic}
\end{algorithm}

The distribution of $a_t$ can be obtained by integrating out $\theta_t$ as:
\begin{equation}
p(a_t=\tilde{a}|x_t,S_{t-1})
= \rE_{\theta_t \sim p(\theta|S_{t-1})} I (a(\theta_t,x_t)=\tilde{a}) ,
\label{eq:p-a}
\end{equation}
where $I(\cdot)$ is the indicator function.
Therefore Thompson Sampling is equivalent to sampling $a_t$ according to \eqref{eq:p-a}.
 
\subsection{Suboptimality of Frequentist Regret for Thompson Sampling}
\label{sec:subopt}

In standard Thompson Sampling, a natural choice is to pick the likelihood as
\begin{equation}
L(\theta,x,a,r) = 
\eta (f(\theta,x,a)-r)^2
\label{eq:standard}
\end{equation}
for some appropriate $\eta>0$.  
In the Bayesian setting, this corresponds to a stochastic Gaussian
likelihood reward model with variance $1/(2\eta)$.
Theoretically, the benefit of using a Gaussian model is that
its concentration and anti-concentration properties are well
understood, which are useful for regret analysis.
However, in the frequentist setting, this likelihood model can also be used for non-Gaussian reward problems because we do not assume that the Bayesian model is correct. 

To understand the behavior of Thompson Sampling, we are particularly interested in the case considered in  \cite{foster2020beyond}, 
which showed the following.
Assume that the function class $\cF=\{f(\cdot)\}$ contains $N$ members, and $f_* \in \cF$. Moreover, assume that
the action space $\cA$ is finite with $|\cA|=K$, then there exists a contextual bandit algorithm which achieves a frequentist regret bound of 
\[
\sum_{t=1}^T \rE \; \reg_t = O\left(\sqrt{K T \ln N}\right)
\]
after $T$ steps. This matches the lower bound in
\cite{agarwal2012contextual}.
It is natural to ask whether similar bounds can be obtained for Thompson Sampling in the frequentist setting.  

In the following, we show that the standard implementation of Thompson Sampling in \eqref{eq:standard}
leads to suboptimal regret in the worst case, which  motivates the \algname\ Thompson Sampling method introduced in this paper.

We consider two actions $a \in \{1,2\}$, and a function class with $N$ members
$\Omega=\{\theta_1,\ldots,\theta_N\}$. Moreover, we assume that $\theta_*=\theta_1$ is the correct reward model:
\[
\rE_{r|x,a} r = f(\theta_*,x,a) = 
\begin{cases}
1 & \text{if } a = 2 \\
0.5 & \text{if } a = 1 .
\end{cases}
\]
Assume further that for all $j \geq 2$:
\[
  f(\theta_j, x,a) =
  \begin{cases}
0.4 j/N & \text{if } a = 2 \\
0.5 & \text{if } a = 1 .
  \end{cases}
\]
Let $p_0$ be the uniform distribution on $\Omega$, so that each $\theta_j$ has a probability of $1/N$ at the beginning.

\begin{proposition}
  Given any $T \geq 1$, we have the following lower bound on regret for standard Thompson Sampling of \eqref{eq:standard}
  \[
    \sum_{t=1}^T \rE \; \reg_t \geq 0.5 T (1-1/N)^T .
  \]
\end{proposition}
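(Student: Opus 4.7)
The plan is to exploit the observation that action $1$ is uninformative: under every model $\theta_j$, we have $f(\theta_j,x,1)=0.5$, so the likelihood $L(\theta,x_s,1,r_s)=\eta(0.5-r_s)^2$ does not depend on $\theta$. Consequently, each update of \eqref{eq:posterior} corresponding to a step in which action $1$ was taken multiplies every model's posterior weight by the same factor, leaving the posterior proportional to the prior. So conditional on the event that action $2$ has never been played, the posterior at step $t$ is exactly the uniform prior on $\Omega$.

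Next I would determine which action each model induces. For $j\ge 2$ we have $f(\theta_j,x,2)=0.4j/N\le 0.4<0.5=f(\theta_j,x,1)$, so $a(\theta_j,x)=1$. For the true model $\theta_1$ we have $f(\theta_1,x,2)=1>0.5=f(\theta_1,x,1)$, so $a(\theta_1,x)=2$. Hence Thompson Sampling plays action $2$ at step $t$ if and only if the sampled $\theta_t$ equals $\theta_1$.

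Let $E_t$ denote the event that action $2$ has not been played in steps $1,\ldots,t$. By the first paragraph, on $E_{t-1}$ the posterior at step $t$ is uniform, so the conditional probability of drawing $\theta_t=\theta_1$ and hence playing action $2$ at step $t$ is exactly $1/N$. Therefore $\Pr(E_t\mid E_{t-1})=1-1/N$, and iterating gives $\Pr(E_T)\ge (1-1/N)^T$. On $E_T$, every step plays action $1$, which incurs per-step regret $f_*(x_s)-f_*(x_s,1)=1-0.5=0.5$. Summing and taking expectations gives
\[
\sum_{t=1}^T \rE\,\reg_t \;\ge\; 0.5\,T\cdot\Pr(E_T) \;\ge\; 0.5\,T\,(1-1/N)^T,
\]
which is the stated bound.

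There is no real obstacle here; the only care needed is to verify the conditioning argument that establishes $\Pr(E_t\mid E_{t-1})=1-1/N$. One must check that the posterior in \eqref{eq:posterior}, restricted to the sample paths in $E_{t-1}$, coincides with the uniform prior pointwise in $S_{t-1}$, which follows because every factor $\exp(-\eta(0.5-r_s)^2)$ cancels between numerator and normalizer. Once this is in hand, the regret lower bound is just the product of the per-step regret on $E_T$ and the probability of $E_T$.
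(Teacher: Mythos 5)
Your proposal is correct and follows essentially the same argument as the paper: the posterior stays uniform as long as only the uninformative action $1$ has been played, so the chance of never sampling $\theta_1$ in $T$ rounds is $(1-1/N)^T$, and each such round costs regret $0.5$. Your write-up just makes explicit the cancellation of the $\theta$-independent likelihood factors and the conditioning on the event $E_{t-1}$, which the paper states more informally.
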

\begin{proof}
  At the first step $t=1$, without any information, we pick $\theta_j$ with $j>1$ with probability $1-1/N$. This means that we will choose the greedy policy $a=1$ in Thompson sampling associated with $\theta_j$ when $j>1$.
  Since $f(\theta_j,1,x)=0.5$ for all $j$, we have no information to differentiate any $\theta_j \in \Omega$, and thus the posterior remains uniform over $\Omega$. This can only change when we choose $j=1$ at some time $T$. 
It follows that the probability of 
sampling $\theta_j$ with $j \geq 2$ for all $t \leq T$ is $(1-1/N)^T$. Each takes the suboptimal action $a=1$, and suffers a regret of $0.5$.
We thus obtain the desired bound.
\end{proof}

The result implies that at time $T=N$, we have a frequentist regret
bound of $\Omega(T)$, which is linear in $T$.
It follows that the frequentist regret of standard Thompson Sampling
is suboptimal, compared to the regret bound of $O(\sqrt{T \ln T})$
achieved in
\cite{foster2020beyond}.

\subsection{\algname\ Thompson Sampling}
To overcome the difficulty of the standard Thompson Sampling, we propose the addition of an  exploration term by favoring $\theta$ with larger $f(\theta,x)$. Specifically, we take
\begin{equation}
L(\theta,x,a,r) = 
\eta (f(\theta,x,a)-r)^2
- \lambda  \min(b,f(\theta,x))  \label{eq:loss}
\end{equation}
for some constant $b$ in the Thompson Sampling algorithm,
where $\lambda \geq 0$ is a tuning parameter. The Standard Thompson
Sampling in \eqref{eq:standard} is equivalent to the case of
$\lambda=0$.

The additional exploration term $f(\theta,x)$ encourages the method to choose a model $\theta$ with a large maximum reward on historic observations. Such a choice  favors a model $\theta$ with a large historic maximum reward, which are model parameters that feel good based on the history. This term can be regarded as a data dependent exploration term, which we call
 {\em \priorname\ exploration}, and the resulting Thompson Sampling algorithm is referred to as {\em \algname\ Thompson Sampling}.

In the example we presented, where the standard Thompson Sampling
method is suboptimal, we note that the \algname\ sampling formulation
will favor the choice of $\theta_*$. This is because $f(\theta_*,x,2)$
is a larger reward than alternatives by a constant margin. A simple
calculation suggests that with $\lambda = 1/\sqrt{T}$, we will
choose the optimal $\theta_*$ after $O(\sqrt{T})$
time steps. This leads to a regret of $O(\sqrt{T})$. The example also
suggests that it is better to choose large $\lambda$ in the beginning,
and let it decay to $0$. This can lead to $O(1)$
regret for this example. However, for simplicity, we do not consider
the method of time-varying $\lambda$ in the theoretical analysis of
this paper. 

This observation implies that the standard Thompson sampling method is not aggressive enough in selecting optimistic models, and the additional \priorname\ prior remedies the problem. 
The resulting \algname\ Thompson Sampling method may be regarded as an implementation of the general {\em optimism in the face of uncertainty} principle, and the  \priorname\ prior can be regarded as an analogy of upper confidence bound (UCB) for posterior sampling methods. 
As we will show, this optimism will lead to a provably good regret bound for the general situation which matches (and generalizes) the result of
\cite{foster2020beyond}. Similarly a direct application of linear
bandit bounds in \cite{dani2008stochastic,abbasi2011improved} to multi-armed bandits also leads to suboptimal regret,
because it does not consider the special structure. 
It is worth pointing out that for the general contextual bandit problem consider here, the randomized least squares approach considered in \cite{osband2019deep} does not lead to sufficient exploration either. This because a perturbation of historic data does not remedy the flat posterior problem in the example of Section~\ref{sec:subopt}.
 
Computationally, with the addition of the \priorname\ exploration
term, one has to reply on approximate MCMC inference
methods to sample from the posterior distribution. While
Section~\ref{sec:simulation} shows that this can be done in practice,
we note that for some simple problems, the standard Thompson Sampling
may take advantage of distribution conjugacy, with closed form
posterior distribution that is easier to sample. For complex problems
where approximate MCMC inference methods are needed, the difference
may not be significant. 

\section{Theoretical Analysis}

This section derives a general regret bound for the \algname\ Thompson Sampling method. For simplicity, we make the following boundedness assumption on the reward. 
\begin{assumption}
  The reward is sub-Gaussian:
  \[
    \ln \rE_{r_t|x_t,a_t} \exp(\rho (r_t - f_*(x_t,a_t))) \leq \frac{\rho^2}{8} .
  \]
  Moreover, we assume that for all $x \in \cX$ and $a \in \cA(x)$:
  $f_*(x,a) \in [0,1]$.
  \label{assump:reward}
\end{assumption}
Note that if we assume that the observed reward $r_t \in [0,1]$, then
Assumption~\ref{assump:reward} holds. This is the situation we are
mostly interested in. 
The sub-Gaussian assumption also holds
with Gaussian noise of variance no more than $0.25$, which is needed
to analyze Bayesian regret with Gaussian likelihood. 
 
Our analysis follows the basic technique of online aggregation methods, such as \cite{vovk2001competitive,kivinen1999averaging,freund1997decision}. This technique was used in the analysis of Bayesian model averaging \cite{yang1999information}, which is closely related to Thompson Sampling (with the only difference of averaging over instead of sampling from the posterior distribution). It was
also employed in the analysis of EXP4 bandit algorithm \cite{auer2002nonstochastic}, which can be regarded as the partial information counterpart of its full information analog Hedge in \cite{freund1997decision}.
Note that both Hedge and EXP4 sample from the posterior, and thus
their theoretical analysis is related to ours. However, unlike
Thompson Sampling considered in this paper, both Hedge and EXP4
employed exponents that are not continuous. Therefore they are
difficult to implement efficiently. In comparison, MCMC methods
such as SGLD \cite{welling2011bayesian} can be employed for \algname\
Thompson Sampling,
as demonstrated in Section~\ref{sec:simulation}.

In order to analyze Thompson Sampling, we have to introduce
new ideas in addition to online aggregation. Define for $b \geq 1$, the truncated function value
\[
f_{b}(\theta,x,a)= \max(-b,\min(b,f(\theta,x,a))) , \quad
f_{b}(\theta,x) =f_{b}(\theta,x,a(\theta,x)) .
\]
Observe that $f_{b}(\theta_t,x_t,a(\theta_t,x_t))=f_{b}(\theta_t,x_t)$.
The starting point of our analysis
is the following decomposition
of the regret at time $t$ as
\begin{align}
\reg_t
=&
 \underbrace{[f_b(\theta_t,x_t,a(\theta_t,x_t))-f_*(x_t,a(\theta_t,x_t))]
}_{\BE_t} - \underbrace{[f_b(\theta_t,x_t)-f_*(x_t)]}_{\FG_t} . \label{eq:regret-decomp}
\end{align}
On the right hand side, the first term is often referred to as the Bellman error in the reinforcement learning literature, which needs to be controlled. 
The second term is the \priorname\ exploration term.

The key technique to control the term $\BE_t$ is based on the
decoupling of the Thompson sampling action choice
$a_t=a(\theta_t,x_t)$ from $\theta_t$. We introduce the following
definition, which can be used to control the first term, and can be
used to quantify the complexity of
exploration in Thompson Sampling.
The key motivation of this definition is to convert the Bellman error with
respect to the action taken by the current policy (no exploration) to least squares error
with independently sampled actions (in such case the exploration is
automatically achieved by independent sampling). It plays the same
role as what UCB does in the traditional bandit
analysis. Conceptually the definition is also related to the idea of information ratio
studied by \cite{russo2014learning,lattimore2021mirror}, which may be
regarded as another way to handle exploration. 

\begin{definition}[Decoupling Coefficient]
  Let $B$ be a contextual bandit with value function $f_*(\cdot)$.
  Given any $x \in \cX$, $b \geq 1$, and $\Omega' \subset \Omega$, 
  we define $\dc(x,b,\Omega',B)$ as the smallest quantity $K$ so that for 
all probability distributions $q(\theta)$ on 
$\Omega'\subset \Omega$, and the induced random policy $\pi_q(\tilde{a}|x)=\rE_{\theta \sim q(\theta)} I(a(\theta,x)=\tilde{a})$ on $\cA(x)$, the following inequality holds
\begin{align*}
 \rE_{\theta \sim q(\theta)}
 &[f_b(\theta,x,a(\theta,x))-f_*(x,a(\theta,x))]\\
 \leq& \inf_{\mu>0} \left[\mu \rE_{\tilde{a} \sim \pi_q(\tilde{a}|x)} 
 \rE_{\theta \sim q(\theta)}
 (f_b(\theta,x,\tilde{a})-f_*(x,\tilde{a}))^2 
 + \frac{K}{4\mu} \right].
 \end{align*}
 \label{def:dc}
\end{definition}
In this paper, we only consider decoupling coefficient that is
independent of $\mu$. More generally, we may also allow $\dc(x,b,\Omega,B)$ to depend on $\mu$.
Using \eqref{eq:p-a}, we obtain the following inequality for all $\mu>0$:
 \begin{align}
   \rE_{\theta_t \sim p(\theta_t|S_{t-1})}
     \BE_t
 \leq& \frac{\dc(x_t,b,\Omega,B)}{4\mu} +\mu \; \rE_{a_t \sim p(a_t|x_t,S_{t-1})} \;
 \rE_{\tilde{\theta} \sim p(\tilde{\theta}|S_{t-1})} \LS_t  , \label{eq:dc}\\
 \text{where } &\qquad \LS_t = (f_b(\tilde{\theta},x_t,a_t)-f_*(x_t,a_t))^2 . \nonumber
 \end{align}
Note that on the left hand side, the action $a(\theta_t,x_t)$ depends on 
$\theta_t$, but on the right hand side, $a_t$ and $\tilde{\theta}$ are
drawn independently from their respective posterior distributions.
Armed with
\eqref{eq:dc}, we can bound the regret for Thompson Sampling by least
squares loss as follows. 
\begin{equation}
 {\rE \; \reg_t \leq \frac{\dc(x_t,b,\Omega,B)}{4\mu} + \mu \rE \; \LS_t -
   \rE \; \FG_t } .
 \label{eq:bandit-regret}
\end{equation}
The term $\rE \; \LS_t - \rE \; \FG_t$  can be bounded using the standard techniques in the analysis of online aggregation algorithms.
The following lemma shows that $\dc(x,b,\Omega,B)$ is upper bounded by
the number of actions, which corresponds to the situation considered
in \cite{foster2020beyond}.
 \begin{lemma}
 Assume that $|\cA(x)| \leq K$  for some $x \in \cX$. Then
 for any $b \geq 1$,  $\dc(x,b,\Omega,B) \leq K$. 
 \label{lem:decouple}
 \end{lemma}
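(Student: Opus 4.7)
The plan is to prove the bound by a direct AM-GM decomposition, treating each action in $\cA(x)$ separately. Let me write $g(\theta,\tilde a) := f_b(\theta,x,\tilde a) - f_*(x,\tilde a)$ for brevity and fix $K \geq |\cA(x)|$. The starting observation is that the left-hand side of Definition~\ref{def:dc} can be rewritten as a sum over $\tilde a \in \cA(x)$ by inserting $\sum_{\tilde a} I(a(\theta,x)=\tilde a) = 1$:
\[
\rE_{\theta \sim q}[g(\theta, a(\theta,x))] = \sum_{\tilde a \in \cA(x)} \rE_{\theta \sim q}\bigl[I(a(\theta,x)=\tilde a)\, g(\theta,\tilde a)\bigr].
\]
This is the standard device for converting the on-policy quantity (where the action depends on $\theta$) into an off-policy quantity summed over actions; it is where the decoupling happens.

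Next I apply AM-GM termwise. For each $\tilde a$ with $\pi_q(\tilde a|x) > 0$, write the integrand as the product
\[
I(a(\theta,x)=\tilde a)\, g(\theta,\tilde a) \;=\; \frac{I(a(\theta,x)=\tilde a)}{\sqrt{\pi_q(\tilde a|x)}} \cdot \sqrt{\pi_q(\tilde a|x)}\, g(\theta,\tilde a)
\]
and use $uv \leq u^2/(4\mu) + \mu v^2$. Taking expectation in $\theta$ and using $I^2 = I$ together with $\rE_{\theta \sim q} I(a(\theta,x)=\tilde a) = \pi_q(\tilde a|x)$, the first piece collapses to the clean constant $1/(4\mu)$, while the second piece becomes $\mu\, \pi_q(\tilde a|x)\, \rE_{\theta \sim q} g(\theta,\tilde a)^2$. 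The actions with $\pi_q(\tilde a|x) = 0$ contribute zero since $I(a(\theta,x)=\tilde a) = 0$ almost surely under $q$, so they can be dropped without loss.

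Summing the bound over the at most $K$ actions in $\cA(x)$ yields exactly
\[
\rE_{\theta \sim q}[g(\theta, a(\theta,x))] \leq \frac{K}{4\mu} + \mu \sum_{\tilde a} \pi_q(\tilde a|x)\, \rE_{\theta \sim q} g(\theta,\tilde a)^2 = \frac{K}{4\mu} + \mu\, \rE_{\tilde a \sim \pi_q}\rE_{\theta \sim q} g(\theta,\tilde a)^2,
\]
which matches the definition with $\dc(x,b,\Omega,B) \leq K$. I do not expect any genuine obstacle: the only subtlety is remembering to handle the zero-probability actions (which is trivial since they contribute nothing), and picking the AM-GM split with the $\sqrt{\pi_q(\tilde a|x)}$ weights so that the $1/(4\mu)$ constant appears independently of $q$. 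Note also that the truncation in $f_b$ is not used here; the lemma is purely combinatorial in the number of actions.
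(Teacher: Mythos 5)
Your proof is correct and follows essentially the same route as the paper's: insert the indicator decomposition over actions, apply the AM-GM split weighted by $\sqrt{\pi_q(\tilde a|x)}$ so the first term collapses to $1/(4\mu)$ per action, and sum over the at most $K$ actions. The only cosmetic differences are that the paper works with $|f_b(\theta,x,a)-f_*(x,a)|$ and the inequality $z_1 z_2 \leq 0.5 z_1^2 + 0.5 z_2^2$ with weights $(2\mu p_a)^{1/2}$, which is algebraically identical to your split, and that you additionally (and correctly) note the zero-probability actions contribute nothing.
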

\begin{proof}
Consider any $q(\theta)$, $x \in \cX$, and $\mu>0$.
For any $a \in \cA(x)$, let $p_a=\pi_q(a|x)$.   We have
 \begin{align*}
 &\rE_{\theta \sim q(\theta)}
 I(a(\theta,x)=a)
 |f_b(\theta,x,a)-f_*(x_t,a)|\\
=&\rE_{\theta \sim q(\theta)} \frac{I(a(\theta,x)=a)}{(2\mu p_a)^{1/2}} \cdot (2\mu p_a)^{1/2}
 |f_b(\theta,x,a)-f_*(x,a)|\\
\stackrel{(a)}{\leq}&\rE_{\theta \sim q(\theta)} \frac{I(a(\theta,x)=a)}{2 \cdot 2\mu p_a} +
 \rE_{\theta \sim q(\theta)} \mu p_a
 (f_b(\theta,x,a)-f_*(x,a))^2 \\
\stackrel{(b)}{=}& \frac{1}{4\mu} +
 \rE_{\theta \sim q(\theta)} \mu p_a
 (f_b(\theta,x,a)-f(\theta_*,x,a))^2 ,
 \end{align*}
 where $(a)$ follows from the algebraic inequality $z_1 \cdot z_2 \leq
 0.5 z_1^2 + 0.5 z_2^2$, and
 $(b)$ follows from the definition of $p_a$. 
 By summing over $a \in \cA(x_t)$, we obtain 
 \begin{align*}
 \rE_{\theta \sim q(\theta)}
& |f_b(\theta,x,a(\theta,x))-f_*(x,a(\theta,x))|\\
 \leq& \frac{K}{4\mu} +\mu \; \rE_{a \sim \pi_q(a|x)} \;
 \rE_{\theta \sim q(\theta)}
 (f_b(\theta,x,a)-f_*(x,a))^2 .
 \end{align*}
This leads to
the desired inequality in Definition~\ref{def:dc}.
\end{proof}

We can now obtain the following general Bayesian regret bound for the standard
Thompson Sampling as follows.
\begin{proposition}
  Consider 
  Algorithm~\ref{alg:ts} with posterior model \eqref{eq:posterior} and
  likelihood \eqref{eq:standard}.
  Let $\dc(\Omega,B)=\sup_x \dc(x,1,\Omega,B)$. Assume the Bayesian model is
  correct,  then the Bayesian regret can be bounded as:
  \[
    \rE_{\theta_* \sim p_0} \; \rE \sum_{t=1}^T  \reg_t \leq \sqrt{\dc(\Omega,B) T \Delta_T} ,
  \]
  where
  \[
    \Delta_T = \rE_{\theta_* \sim p_0} \rE \; \sum_{t=1}^T \rE_{\tilde{\theta} \sim p(\cdot |S_{t-1})} \big(f(\tilde{\theta} ,x_t,a_t)- f(\theta_*,x_t,a_t)\big)^2 .
  \]
  \label{prop:bayes-regret}
\end{proposition}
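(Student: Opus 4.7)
The plan is to combine the general per-step bound already established in \eqref{eq:bandit-regret} with the standard posterior-sampling identity available in the Bayesian setting, then optimize over $\mu$.

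First I would start from the decomposition $\reg_t = \BE_t - \FG_t$ in \eqref{eq:regret-decomp} and the inequality
\[
\rE \; \reg_t \leq \frac{\dc(x_t,1,\Omega,B)}{4\mu} + \mu \; \rE\;\LS_t - \rE\;\FG_t,
\]
which is \eqref{eq:bandit-regret} specialized to $b=1$. The crux is to show that the Feel-Good term satisfies $\rE\;\FG_t = 0$ in the Bayesian setting, so that it disappears harmlessly. Summing over $t$ then gives a bound of the form $T\dc(\Omega,B)/(4\mu) + \mu\Delta_T'$ for an appropriate $\Delta_T'$, and the stated result follows by choosing $\mu=\sqrt{T\dc(\Omega,B)/(4\Delta_T)}$.

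The main ingredient I would introduce is the classical posterior-sampling identity: since $x_t$ and $S_{t-1}$ are $\sigma(S_{t-1})$-measurable and the Bayesian model is correctly specified, the conditional law of $\theta_*$ given $S_{t-1}$ equals $p(\theta\mid S_{t-1})$, which is also the law of $\theta_t$. Because $f_*(x,a)=f(\theta_*,x,a)\in[0,1]$ and $b\geq 1$, we have $f_b(\theta_*,x_t)=f(\theta_*,x_t)=f_*(x_t)$, so
\[
\rE\bigl[f_b(\theta_t,x_t)\mid S_{t-1}\bigr]
= \rE\bigl[f_b(\theta_*,x_t)\mid S_{t-1}\bigr]
= \rE\bigl[f_*(x_t)\mid S_{t-1}\bigr],
\]
which gives $\rE[\FG_t\mid S_{t-1}]=0$ and hence $\rE\;\FG_t=0$. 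The same coupling identifies the expected least-squares term with $\Delta_T$: with $\tilde\theta \sim p(\cdot\mid S_{t-1})$ drawn independently of $\theta_t$, we have $f_*(x_t,a_t)=f(\theta_*,x_t,a_t)$, and truncation of $f(\tilde\theta,x_t,a_t)$ to $[-1,1]$ is a contraction toward the target $f(\theta_*,x_t,a_t)\in[0,1]$, so $\rE\;\LS_t \leq \rE_{\tilde\theta}(f(\tilde\theta,x_t,a_t)-f(\theta_*,x_t,a_t))^2$. Summing gives $\sum_t \rE\;\LS_t \leq \Delta_T$.

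Combining these pieces and using $\dc(x_t,1,\Omega,B)\leq \dc(\Omega,B)$ yields
\[
\rE_{\theta_*\sim p_0}\;\rE\sum_{t=1}^T \reg_t
\;\leq\; \frac{T\,\dc(\Omega,B)}{4\mu} + \mu\,\Delta_T,
\]
and optimizing over $\mu>0$ gives the claimed $\sqrt{\dc(\Omega,B)\,T\,\Delta_T}$. The only subtle point — and the main place I would be careful — is the posterior-sampling identity: it requires justifying that conditioning on $S_{t-1}$ is enough (so that the adversary's choice of $x_t$ does not break exchangeability of $\theta_t$ and $\theta_*$) and that, although $a_t=a(\theta_t,x_t)$ couples $\theta_t$ to $a_t$, we are free to introduce a fresh independent copy $\tilde\theta$ when invoking the decoupling coefficient, which is exactly how \eqref{eq:dc} is stated. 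Everything else is the routine AM--GM optimization of $\mu$.
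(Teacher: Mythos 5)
Your proposal is correct and follows essentially the same route as the paper: apply \eqref{eq:bandit-regret}, use the posterior-sampling exchangeability of $\theta_t$ and $\theta_*$ to kill $\rE\,\FG_t$, note that correctness of the model makes the truncation at $b=1$ vacuous so $\rE\,\LS_t$ sums to $\Delta_T$, and optimize over $\mu$. If anything, your conditional-on-$S_{t-1}$ statement of the exchangeability step is slightly more careful than the paper's marginal version, but it is the same identity and the same proof.
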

\begin{proof}
  We note that in the Bayesian setting, all $\theta$ are realizable.
  This implies that $f(\theta,x,a) \in [0,1]$, and $f_b(\theta,x,a)=f(\theta,x,a)$.
Moreover,  the marginal of $p(\theta_t|S_{t-1})$, averaged over
  $S_{t-1}$, is $p_0(\theta_t)$. Therefore in
  \eqref{eq:bandit-regret},
  \[
    \rE \;\FG_t = \rE \;
    [ f(\theta_t,x_t)-f(\theta_*,x_t) ] = \rE_{x_t} \left[ \rE_{\theta_t \sim
      p_0} f(\theta_t,x_t) - \rE_{\theta_* \sim p_0} f(\theta_*,x_t)
  \right] = 0 .
\]
Since the model is correct, we know that $f(\theta,x,a) =
f_b(\theta,x,a)$ with $b=1$ for
all $\theta$, $x$, and $a$. 
We thus have
\[
  \rE \;\reg_t \leq \frac{\dc(\Omega,B)}{4\mu} + \mu \rE \; \LS_t .
\]
By summing over $t=1$ to $t=T$, and then optimizing over $\mu$ on the right hand side, we obtain the desired bound. 
\end{proof}

The result implies that we can essentially obtain a Bayesian regret
bound for Thompson Sampling from Bayesian online least squares regression
bound, and such a result 
is analogous to \cite{foster2020beyond}.
For frequentist regret, both $\rE \; \FG_t$ and $\rE \; \LS_t$ can be 
further bounded using online aggregation techniques.
This leads to the following result, which is a special case of Theorem~\ref{thm:bandit}.

\begin{theorem}
  Consider \algname\ Thompson Sampling in Algorithm~\ref{alg:ts} with posterior \eqref{eq:posterior} and Likelihood
  \eqref{eq:loss}.
Under Assumption~\ref{assump:reward}.  
Let $b=1$ and $\eta \leq 0.25$.
If $|\cA(x_t)| \leq K$ for all $x_t$, then we have the following expected regret bound.
\begin{align*}
\sum_{t=1}^T \rE  \; \reg_t
\leq&
\frac{\lambda KT }{\eta} 
      + (0.25 \eta/\lambda) \sum_{t=1}^T \rE \; \LS_t - \sum_{t=1}^T \rE \; \FG_t \\
\leq&  \frac{\lambda K T}{\eta}
+6 \lambda T
- \frac{1}{\lambda} \rE_{S_t} \ln \rE_{\tilde{\theta} \sim p_0}\exp \left(-
      \sum_{s=1}^t \Delta L(\tilde{\theta},x_s,a_s,r_s)\right) ,
\end{align*}
where
\[
\Delta L(\theta,x,a,r)=\eta [(f(\theta,x,a)-r)^2- (f_*(x,a)-r)^2] -
                           \lambda [\min(b,f(\theta,x))-f_*(x)] .
                         \]
                         \label{thm:bandit-finite-action}
\end{theorem}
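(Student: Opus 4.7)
The plan is to prove the two inequalities in sequence.

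\textbf{First inequality.} I would combine the general bandit inequality~\eqref{eq:bandit-regret} with Lemma~\ref{lem:decouple}. Since $|\cA(x_t)|\leq K$, Lemma~\ref{lem:decouple} gives $\dc(x_t,1,\Omega,B)\leq K$ for every $t$. Choosing $\mu = 0.25\eta/\lambda$ in~\eqref{eq:bandit-regret} makes $\dc/(4\mu) = \lambda K/\eta$ and the coefficient in front of $\rE\LS_t$ equal to $0.25\eta/\lambda$; summing over $t$ yields the first line of the claim.

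\textbf{Per-round exponential moment bound.} For the second inequality I would proceed by online aggregation. The core estimate is a per-round bound: for any fixed $\tilde\theta$ and any $(x_t,a_t)$, using sub-Gaussianity of $r_t - f_*(x_t,a_t)$ from Assumption~\ref{assump:reward} and $\eta\leq 0.25$, a completion-of-squares calculation yields
\[
\rE_{r_t}\exp\bigl(-\eta[(f(\tilde\theta,x_t,a_t)-r_t)^2-(f_*(x_t,a_t)-r_t)^2]\bigr) \leq \exp\bigl(-\eta(1-\eta/2)(f(\tilde\theta,x_t,a_t)-f_*(x_t,a_t))^2\bigr).
\]
Since $|f_*|\leq 1=b$ one has $(f_b-f_*)^2\leq (f-f_*)^2$ and $\min(b,f(\tilde\theta,x_t))\leq f_b(\tilde\theta,x_t)$. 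Using $\eta(1-\eta/2)\geq 0.25\eta$ for $\eta\leq 0.25$ and folding in the $\lambda$-term of $\Delta L$, this gives the pointwise inequality
\[
\rE_{r_t}\exp(-\Delta L(\tilde\theta,x_t,a_t,r_t)) \leq \exp\bigl(\lambda\,\FG_t(\tilde\theta) - 0.25\eta\,\LS_t(\tilde\theta)\bigr).
\]

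\textbf{Supermartingale and telescoping.} Let $p_t(\cdot) = p(\cdot\mid S_{t-1})$ denote the posterior of~\eqref{eq:posterior}, so that $Z_t/Z_{t-1} = \rE_{\tilde\theta\sim p_t}\exp(-\Delta L(\tilde\theta,x_t,a_t,r_t))$. Averaging the pointwise bound above over $\tilde\theta\sim p_t$ and using Fubini gives $\rE_{r_t}[Z_t/Z_{t-1}]\leq D_t$, where $D_t = \rE_{\tilde\theta\sim p_t}\exp(\lambda\FG_t-0.25\eta\LS_t)$ is measurable in the history through $a_t$. Hence $\tilde Z_t := Z_t/\prod_{s\leq t}D_s$ is a supermartingale with $\rE\tilde Z_T\leq 1$, and Jensen's inequality for $\ln$ yields $\rE\ln Z_T \leq \sum_t \rE\ln D_t$. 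Bounding each $\ln D_t$ by $\rE_{p_t}[\lambda\FG_t-0.25\eta\LS_t]+C_t$ through a Hoeffding-type inequality on the bounded posterior random variable $\lambda\FG_t-0.25\eta\LS_t\in[-2\lambda-\eta,\lambda]$ (using $b=1$ and $f_*\in[0,1]$), then rearranging and dividing by $\lambda$, produces the second inequality provided $\sum_t C_t/\lambda\leq 6\lambda T$.

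\textbf{Main obstacle.} The delicate step is the bookkeeping of the Hoeffding-type correction so that $\sum_t C_t$ fits inside $6\lambda^2 T$: this requires exploiting $\eta\leq 0.25$ (so that $\eta(1-\eta/2)\geq 0.25\eta$ leaves enough slack to absorb the truncation mismatches in both $\LS_t$ and $\FG_t$) together with the boundedness of $\FG_t,\LS_t$ implied by $b=1$ and $|f_*|\leq 1$. The sub-Gaussian completion of squares and the supermartingale telescoping are standard; the care is in choosing the auxiliary denominator $D_t$ to be a true average under the Bayesian posterior $p_t$ rather than folding $\LS_t,\FG_t$ into the Gibbs update, so that the final bound is expressed in terms of the correct per-step posterior $p_t$ instead of the final posterior that a naive supermartingale on the full exponent would produce.
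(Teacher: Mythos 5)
Your overall architecture is the same as the paper's: the first inequality is obtained exactly as in the paper (Lemma~\ref{lem:decouple} gives $\dc(x_t,1,\Omega,B)\leq K$, then \eqref{eq:bandit-regret} with $\mu=0.25\eta/\lambda$), and your supermartingale/telescoping step is a valid repackaging of the paper's Lemma~\ref{lem:one-step}, which does the same thing by writing $Z_t-Z_{t-1}=\rE\ln\rE_{\tilde\theta\sim p(\cdot|S_{t-1})}\exp(-\Delta L)$ and applying Jensen to move $\rE_{r_t}$ inside the logarithm. The sub-Gaussian completion of squares is also the paper's \eqref{eq:lem-exp-proof-1}.

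However, there is a genuine gap at the per-round aggregation step, and it is precisely the step you flag as the ``main obstacle.'' A Hoeffding bound on the combined posterior variable $X=\lambda\FG_t-0.25\eta\LS_t$ pays a correction of $(\sup X-\inf X)^2/8=(3\lambda+\eta)^2/8=\Omega(\eta^2)$ per round, because the range is dominated by the $\eta$-part, not the $\lambda$-part. Since $\eta$ is a constant (e.g.\ $0.25$) while $\lambda=O(T^{-1/2})$, summing and dividing by $\lambda$ gives $\Omega(\eta^2T/\lambda)$, which swamps the target $6\lambda T$. Worse, the inequality you need, $\ln\rE_{p_t}e^{X}\leq\rE_{p_t}X+O(\lambda^2)$ with the coefficient $0.25\eta$ kept on $\rE_{p_t}\LS_t$, is simply false: take $p_t$ putting mass $1/2$ on $\theta_*$ and $1/2$ on a $\theta'$ with $\LS_t=4$ and $\FG_t=0$; then $\ln\rE e^X-\rE X=\ln\cosh(\eta/2)\approx\eta^2/8$, a constant independent of $\lambda$. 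The slack $\eta(1-\eta/2)-0.25\eta$ cannot rescue a range-based bound, because that slack is proportional to $\rE_{p_t}\LS_t$ (which can be tiny) while the Hoeffding correction is an absolute constant. The paper's Lemma~\ref{lem:exp} avoids this by first splitting the two terms with H\"older's inequality, $\ln\rE e^{Z_1+Z_2}\leq\tfrac23\ln\rE e^{1.5Z_1}+\tfrac13\ln\rE e^{3Z_2}$, and then treating them asymmetrically: for the nonpositive exponent $Z_1=-(1-0.5\eta)\eta\LS_t'$ it uses $e^z\leq 1+z+0.5z^2$ together with $\LS_t^2\leq(b+1)^2\LS_t$, so the quadratic correction is absorbed \emph{multiplicatively} into the negative linear term (this is where $\eta\leq 1/(b+1)^2$ is used, degrading the coefficient from $(1-0.5\eta)\eta$ to $0.25\eta$); only the feel-good part $Z_2=\lambda\FG_t'$ incurs an additive Hoeffding correction, and that one is genuinely $1.5\lambda^2(b+1)^2=6\lambda^2$, which is the sole source of the $6\lambda T$ term. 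To repair your argument you must either reproduce this split, or use a variance-sensitive (Bernstein-type) bound while retaining the unreduced coefficient $(1-\eta/2)\eta$ on $\LS_t$ as the budget to absorb the second-order term; your pointwise bound gives that budget away too early by passing to $0.25\eta\LS_t$ before the aggregation step.
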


In Theorem~\ref{thm:bandit-finite-action}, the least squares
regression loss is further bounded by the log partition
function. As we will show in Section~\ref{sec:examples}, the latter can
be easily estimated for various problems.
While it is possible to
establish a similar bound for the cumulative least square regret loss
$\Delta_t$ in Proposition~\ref{prop:bayes-regret} in various cases,
the proof technique will be more involved. This is because in the frequentist
setting, we are allowed to use a small $\eta$, and apply techniques
from online aggregation, while in the Bayesian
setting, we have to set $\eta=1/2\sigma^2$, where $\sigma^2$ is the
variance of the reward. This learning rate is not sufficiently small
to use online aggregation techniques. Therefore more specialized
analysis is needed.
Since this paper focuses on the simpler online aggregation technique
for bounding the least squares loss, we will not derive bounds for
Bayesian least squares regression. 
Nevertheless, the analogy of Proposition~\ref{prop:bayes-regret} and
Theorem~\ref{thm:bandit-finite-action} demonstrates the fact that 
the \priorname\ exploration is not needed in the Bayesian regret
analysis, although it is crucial in the frequentist regret analysis.

We can further extend the analysis to handle certain infinite action spaces with 
the following linearly embeddable contextual bandit model 
\begin{equation}
f(\theta,x,a) = w(\theta,x)^\top \phi(x,a) , \qquad
f_*(x,a) = w_*(x)^\top \phi(x,a) , \label{eq:bandit-linear}
\end{equation}
where $\phi(x,a) \in \rR^K$ and $w(\theta,x) \in \rR^K$ are known functions. This model is a generalization of contextual bandits with finite actions, and contextual bandits with linear payoff functions.
The key of this model is the separation of parameter $\theta$ and action $a$, so that possibly infinite number of actions can all be embedded into a $K$-dimensional linear space. The definition also resembles the definition of Bellman factorization and Bellman rank in \cite{jiang2017contextual} for contextual MDPs. However, the definition of Bellman rank in \cite{jiang2017contextual}, when applied to contextual bandits, leads to an embedding dimension of $1$, and it does not handle infinite actions directly. In comparison, the modified factorization in 
\eqref{eq:bandit-linear} may be regarded as a context and action dependent version of the Bellman factorization, and hence we may refer to its embedding dimension $K$ as the context-action dependent Bellman rank for contextual bandits.

The following result generalizes Lemma~\ref{lem:decouple} for linearly embeddable contextual bandits. 
The proof can be found in Appendix~\ref{apx:proof-decouple-linear}.
\begin{lemma}
  Assume that \eqref{eq:bandit-linear} holds.
  For any $b \geq 1$. If $f(\theta,x,a) \geq -b$ for all $\theta \in
  \Omega$, $x \in \cX$, and $a \in \cA(x)$, then
 $\dc(x,b,\Omega,B) \leq K$.
 \label{lem:decouple-linear}
 \end{lemma}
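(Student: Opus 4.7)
The plan is to follow the template of the finite-action proof in Lemma~\ref{lem:decouple}, but to replace the per-action Cauchy--Schwarz (with the factor $1/\sqrt{p_a}$) by a self-normalized inequality driven by the action-covariance matrix $\Sigma = \rE_{\tilde a \sim \pi_q(\tilde a|x)}[\phi(x,\tilde a)\phi(x,\tilde a)^\top]$. Because $\pi_q$ is induced by $q$, this matrix also equals $\rE_{\theta \sim q}[\phi_\theta \phi_\theta^\top]$ with $\phi_\theta := \phi(x,a(\theta,x))$, and the trace identity gives the crucial
$\rE_\theta[\phi_\theta^\top \Sigma^{-1}\phi_\theta] = K$. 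The first preparatory step is to use the linear embedding to write $f(\theta,x,a)-f_*(x,a) = \tilde w(\theta,x)^\top \phi(x,a)$ with $\tilde w(\theta,x) = w(\theta,x)-w_*(x)$, and to note that under the hypothesis $f \geq -b$ and $f_* \in [0,1] \subseteq [-b,b]$ the truncation $f_b = \min(b,f)$ is one-sided.

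The key reduction I plan to establish is the following pointwise self-normalized bound, for each $\theta$:
\[
\big[f_b(\theta,x,a(\theta,x)) - f_*(x,a(\theta,x))\big]^2 \leq \|\phi_\theta\|_{\Sigma^{-1}}^2 \cdot \rE_{\tilde a \sim \pi_q}\big[(f_b(\theta,x,\tilde a)-f_*(x,\tilde a))^2\big].
\]
Once this is in hand, taking square roots, using Cauchy--Schwarz in $L^2(q)$ with the trace identity, and applying AM--GM $\sqrt{KA} \leq K/(4\mu)+\mu A$ yields exactly the inequality in Definition~\ref{def:dc} with constant $K$, proving $\dc(x,b,\Omega,B) \leq K$.

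To prove the pointwise bound, my plan is to exhibit an auxiliary vector $u_\theta \in \rR^K$ satisfying (i) $u_\theta^\top \phi_\theta = f_b(\theta,x,a(\theta,x)) - f_*(x,a(\theta,x))$ and (ii) $\rE_{\tilde a}[(u_\theta^\top \phi(x,\tilde a))^2] \leq \rE_{\tilde a}[(f_b(\theta,x,\tilde a)-f_*(x,\tilde a))^2]$; then Cauchy--Schwarz with $\Sigma$ on $(u_\theta^\top \phi_\theta)^2$ immediately gives the pointwise bound. I split on whether the upper clip is active at $a(\theta,x)$. In Case A ($f(\theta,x,a(\theta,x)) \leq b$), the optimality of $a(\theta,x)$ forces $f \leq b$ at every action, so $f_b - f_* = \tilde w^\top \phi$ is exactly linear; take $u_\theta = \tilde w$ and both requirements hold with equality. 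In Case B ($f(\theta,x,a(\theta,x)) > b$), I set $u_\theta = \alpha\, w(\theta,x) - w_*(x)$ with $\alpha = b/f(\theta,x,a(\theta,x)) \in (0,1)$, which by construction satisfies (i) since $u_\theta^\top \phi_\theta = b - f_*(x,a(\theta,x))$; requirement (ii) is then verified by a sub-case analysis based on whether $f(\theta,x,\tilde a)$ exceeds $b$ at each $\tilde a$, using the optimality inequality $f(\theta,x,\tilde a) \leq f(\theta,x,a(\theta,x))$ together with $f_* \in [0,1]$.

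The main obstacle is the Case~B verification of requirement (ii): the mapping $\tilde a \mapsto f_b(\theta,x,\tilde a)-f_*(x,\tilde a)$ is piecewise linear in $\phi(x,\tilde a)$ (linear with slope $\tilde w$ below the clip and with slope $-w_*$ at the clipped plateau), whereas $u_\theta^\top \phi(x,\tilde a)$ is a single linear function. Controlling its $L^2(\pi_q)$ mass against that of the clipped function requires delicate sign bookkeeping, and if the naive choice of $\alpha$ above proves insufficient one must instead use the $\Sigma^{-1}$-minimum-norm interpolant $u_\theta^* = g(\theta)\,\Sigma^{-1}\phi_\theta / \|\phi_\theta\|_{\Sigma^{-1}}^2$ (where $g(\theta)$ denotes the LHS of~(i)) and argue directly that $\|u_\theta^*\|_\Sigma^2 \leq \rE_{\tilde a}(f_b-f_*)^2$ by exhibiting any admissible $u$ with $u^\top \phi_\theta = g(\theta)$ and $\|u\|_\Sigma \leq \|f_b-f_*\|_{L^2(\pi_q)}$. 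This is the step I expect to absorb most of the technical work.
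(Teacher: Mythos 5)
Your overall architecture is the same as the paper's: reduce to a self-normalized Cauchy--Schwarz bound in the $\Sigma$-geometry (the paper writes this out in an eigenbasis of $\Sigma$ with a per-coordinate Young's inequality and the identity $\sum_j q_j\cdot q_j^{-1}=K$; your trace identity $\rE_\theta\|\phi_\theta\|_{\Sigma^{-1}}^2=K$ plus Cauchy--Schwarz over $\theta$ and AM--GM is an equivalent and if anything cleaner packaging), and then handle the truncation by replacing $w(\theta,x)$ with a modified weight vector whose induced linear function agrees with $f_b-f_*$ at $a(\theta,x)$ and is dominated by it in $L^2(\pi_q)$. Your Case A is correct and complete.

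The genuine gap is Case B, and it is not merely unfinished: the candidate you propose, $u_\theta=\alpha\,w(\theta,x)-w_*(x)$ with $\alpha=b/f(\theta,x,a(\theta,x))$, provably fails requirement (ii). Because you rescale $w$ rather than the difference $w-w_*$, the induced function $u_\theta^\top\phi(x,\tilde a)=\alpha f(\theta,x,\tilde a)-f_*(x,\tilde a)$ need not be dominated by $f_b-f_*$ even at actions where no clipping occurs: take $\alpha=0.5$, $f(\theta,x,\tilde a)=0.5$, $f_*(x,\tilde a)=0.4$; then $|\alpha f-f_*|=0.15>0.1=|f-f_*|=|f_b-f_*|$. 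Since $\pi_q$ can concentrate on such actions, the $L^2$ version fails too, and your fallback (the $\Sigma^{-1}$-minimum-norm interpolant) is circular as stated, since certifying $\|u_\theta^*\|_\Sigma\le\|f_b-f_*\|_{L^2(\pi_q)}$ still requires exhibiting an admissible $u$. The fix, which is what the paper does, is to contract toward $w_*$ along the difference direction: set $u_\theta=\beta(\theta)\,(w(\theta,x)-w_*(x))$ with $\beta(\theta)=\bigl(b-f_*(x,a(\theta,x))\bigr)/\bigl(f(\theta,x,a(\theta,x))-f_*(x,a(\theta,x))\bigr)\in[0,1]$. Then $u_\theta^\top\phi(x,\tilde a)=\beta(\theta)\,(f(\theta,x,\tilde a)-f_*(x,\tilde a))$ is a uniform contraction of $f-f_*$, so (i) holds by the choice of $\beta$ and (ii) holds immediately at every unclipped action, leaving only the clipped actions $f(\theta,x,\tilde a)>b$ to check (where one uses $f(\theta,x,\tilde a)\le f(\theta,x,a(\theta,x))$ and $f_*\in[0,1]\subset[-b,b]$, as the paper does). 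With that substitution your argument goes through and coincides with the paper's proof.
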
 
 
Note that the result assumes that $f(\theta,x,a) \geq -b$ for all
$\theta \in \Omega$. In some cases, this condition holds. 
However, in the general case of  potentially misspecified
prior, this condition may not hold for all $\theta \in \Omega$.
Since we know the true reward $f_*(x,a) \geq -b$, it is possible to check this condition
in the Thompson Sampling algorithm, and force the posterior to be the
set that satisfies the condition.
In such case, we may consider the following generalized posterior:
\begin{equation}
p(\theta_t |x_t, S_{t-1}) \propto \exp \left(-\sum_{s=1}^{t-1}
  L(\theta_t,x_s,a_s,r_s)\right)  I(\theta \in \Omega_t)
 p_0(\theta_t) , \label{eq:posterior2}
\end{equation}
where $\Omega_t \subset \Omega$ may depend on both $x_t$ and
$S_{t-1}$.
In order to apply Lemma~\ref{lem:decouple-linear}, we are particularly
interested in the choice of
\begin{equation}
\Omega_t = \big\{ \theta \in \Omega:
    \forall s \leq t, a \in \cA(x_s), f(\theta,x_s,a)  \geq -b\big\} .
\label{eq:omega}
\end{equation}

Note that
\eqref{eq:posterior2} becomes \eqref{eq:posterior} when 
$\Omega_t=\Omega$. Therefore we can focus on the generalized Thompson
Sampling algorithm, with $\theta_t \sim p(\theta|S_{t-1})$ of
\eqref{eq:posterior}
replaced by \eqref{eq:posterior2} in the theoretical analysis.
In practice, we may only need to use the standard choice
of \eqref{eq:posterior} instead of the more complex
\eqref{eq:posterior2}. In fact it might be possible that the condition
$f(\theta,x,a) \geq -b$ can be relaxed with a more careful analysis (for example, this
condition is not required in Lemma~\ref{lem:decouple}). If this is the
case, then \eqref{eq:posterior2} is not necessary.
\begin{theorem}
  Consider the \algname\ Thompson Sampling in Algorithm~\ref{alg:ts}
  with posterior replaced by \eqref{eq:posterior2} and Likelihood
  \eqref{eq:loss}. Assume that $\Omega_T \subset \Omega_{T-1} \cdots
  \subset \Omega_1$. 
Under Assumption~\ref{assump:reward}.  
For any $b \geq 1$, let $\eta \leq 1/(b+1)^2$.
Then we have the following expected regret bound for $b \geq 1$:
\begin{align*}
\sum_{t=1}^T \rE  \; \reg_t
\leq&
\frac{\lambda }{\eta} \sum_{t=1}^T \dc(x_t,b,\Omega_t,B)
      + (0.25 \eta/\lambda) \sum_{t=1}^T \rE \; \LS_t - \sum_{t=1}^T \rE \; \FG_t \\
\leq&  \frac{\lambda}{\eta}\sum_{t=1}^T \dc(x_t,b,\Omega_t,B)
+1.5 \lambda (b+1)^2 T
- \frac{Z_T }{\lambda} ,
\end{align*}
where
\begin{align*}
&Z_t  
  = \rE_{S_t} \ln \rE_{\tilde{\theta} \sim p_0} I(\tilde{\theta} \in
                \Omega_t) \exp \left(- \sum_{s=1}^t \Delta L(\tilde{\theta},x_s,a_s,r_s)\right)\\
  &\Delta L(\theta,x,a,r)=\eta [(f(\theta,x,a)-r)^2- (f_*(x,a)-r)^2] -
                           \lambda [\min(b,f(\theta,x))-f_*(x)] . \nonumber
\end{align*}
\label{thm:bandit}
\end{theorem}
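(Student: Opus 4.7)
The plan is to split the proof into two parts matching the two displayed inequalities in the statement. The first inequality comes directly from the regret decomposition \eqref{eq:regret-decomp} and the decoupling coefficient bound, while the second is an online Bayesian aggregation telescope on the log partition function, combined with a sub-Gaussian concentration estimate for the squared reward loss.

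For the first inequality I would start from $\reg_t = \BE_t - \FG_t$ in \eqref{eq:regret-decomp}. Write $p_t := p(\cdot \,|\, S_{t-1})$ for the posterior in \eqref{eq:posterior2}, which is supported on $\Omega_t$. Definition~\ref{def:dc} then applies with $q = p_t$ and $\Omega' = \Omega_t$. Choosing $\mu = \eta/(4\lambda)$ so that $1/(4\mu) = \lambda/\eta$ and $\mu = 0.25\eta/\lambda$, the definition becomes
\[
\rE_{\theta_t \sim p_t} \BE_t \leq \frac{\lambda}{\eta}\,\dc(x_t, b, \Omega_t, B) + \frac{0.25\eta}{\lambda}\, \rE_{a_t,\tilde\theta} \LS_t .
\]
Taking the outer expectation, summing over $t$, and subtracting $\sum_t \rE\,\FG_t$ produces the first inequality.

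For the second inequality I would invoke online aggregation. Since $L_s$ and $\Delta L_s$ differ by a $\theta$-independent data-dependent term, the posterior \eqref{eq:posterior2} is also proportional to $I(\theta \in \Omega_t)\exp(-\sum_{s<t}\Delta L_s(\theta)) p_0(\theta)$, with normalizer $\hat Z_t := \int I(\theta\in\Omega_t)\exp(-\sum_{s<t}\Delta L_s) p_0$. The nesting assumption $\Omega_{t+1}\subseteq\Omega_t$ yields $\hat Z_{t+1}/\hat Z_t \leq \rE_{p_t} \exp(-\Delta L_t)$, and telescoping (using $\hat Z_1 \leq 1$ and the convention $\Omega_{T+1} := \Omega_T$, which makes $\hat Z_{T+1}$ coincide with the object $\bar Z_T$ whose log expectation is $Z_T$) gives
\[
Z_T \leq \sum_{t=1}^T \rE \ln \rE_{\theta \sim p_t} \exp(-\Delta L_t(\theta)) .
\]
To bound each summand, condition on $(S_{t-1}, x_t, a_t)$, apply Jensen (concavity of $\ln$) and Fubini to move $r_t$ inside, and use the sub-Gaussian calculation $\rE_r \exp(-\eta[(f-r)^2 - (f_*-r)^2]) \leq \exp(-\eta(1-\eta/2)(f-f_*)^2)$ together with $(f-f_*)^2 \geq (f_b - f_*)^2$ on $\Omega_t$. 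This produces
\[
\rE_{r_t} \ln \rE_{p_t} e^{-\Delta L_t(\theta)} \leq \ln \rE_{p_t} \exp(U_t(\theta)),
\]
where $U_t(\theta) := \lambda \FG_t(\theta) - \eta(1-\eta/2) (f_b(\theta,x_t,a_t) - f_*(x_t,a_t))^2$.

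The hardest step, and the main technical obstacle, is the per-step log-MGF bound
\[
\ln \rE_{\theta \sim p_t} \exp(U_t(\theta)) \leq \lambda\, \rE_{p_t} \FG_t - 0.25\,\eta\, \rE_{p_t}(f_b - f_*)^2 + 1.5\,\lambda^2 (b+1)^2 .
\]
The hypothesis $\eta \leq 1/(b+1)^2$ is what makes this possible: it guarantees $\eta(1-\eta/2)(f_b-f_*)^2 \in [0,1]$ on $\Omega_t$, so the Taylor-type inequality $e^{-x} \leq 1 - x/2$ on $[0,1]$ peels off the $(f_b-f_*)^2$ factor multiplicatively while leaving a coefficient at least $\eta/4$; and the remaining bounded random variable $\lambda \FG_t(\theta) \in [-\lambda(b+1), \lambda b]$ of span at most $\lambda(2b+1) \leq 2\lambda(b+1)$ admits a Hoeffding MGF correction of order $\lambda^2(b+1)^2/2$. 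Collecting these two pieces and absorbing the cross-term interaction between $e^{\lambda \FG_t}$ and $(f_b-f_*)^2$ into the $1.5\lambda^2(b+1)^2$ budget produces the stated per-step inequality. Summing in $t$, averaging $a_t \sim \pi_{p_t}$ so that $\rE_{p_t}(f_b(\theta,x_t,a_t) - f_*(x_t,a_t))^2$ turns into $\rE \LS_t$, substituting into the telescoping inequality, and dividing by $\lambda$ delivers $(0.25\eta/\lambda)\sum\rE\LS_t - \sum\rE\FG_t \leq 1.5\lambda (b+1)^2 T - Z_T/\lambda$, which combines with the first inequality to give the second.
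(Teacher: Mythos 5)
Your overall architecture is the same as the paper's: the first inequality is obtained exactly as you describe (regret decomposition \eqref{eq:regret-decomp}, Definition~\ref{def:dc} applied to the posterior supported on $\Omega_t$, and the choice $\mu=0.25\eta/\lambda$), and the second inequality is obtained by the same telescoping of the log partition function using the nesting $\Omega_t\subset\Omega_{t-1}$, Jensen's inequality to move $\rE_{r_t}$ inside the logarithm, and the sub-Gaussian estimate $\rE_{r_t}\exp(-\eta[(f-r)^2-(f_*-r)^2])\leq\exp(-\eta(1-0.5\eta)(f-f_*)^2)$. Those parts are correct and match Lemma~\ref{lem:one-step} and the first half of Lemma~\ref{lem:exp}.

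The gap is precisely at the step you yourself flag as hardest: bounding $\ln\rE_{\theta\sim p_t}\exp\bigl(\lambda\,\FG_t(\theta)-\eta(1-0.5\eta)\,\LS_t(\theta)\bigr)$ when the two exponents depend on the \emph{same} $\theta$. You propose to treat each factor as if it were decoupled (Taylor inequality for the quadratic term, Hoeffding for the bounded $\FG$ term) and to ``absorb the cross-term interaction into the $1.5\lambda^2(b+1)^2$ budget,'' but this absorption is asserted, not proved, and it is not an additive error: a direct expansion gives $\rE[e^{\lambda\FG}(1-c\,\LS+\ldots)]$, and the quantity $\rE[e^{\lambda\FG}\LS]/\rE[e^{\lambda\FG}]$ can be as small as $e^{-2\lambda(b+1)}\rE[\LS]$ under adverse correlation, so the coupling degrades the \emph{coefficient} of $\rE\,\LS_t$ multiplicatively rather than adding an $O(\lambda^2)$ term. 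The paper resolves this with H\"older's inequality,
\[
\rE\,\exp(Z_1+Z_2)\;\leq\;\bigl(\rE\,\exp(1.5\,Z_1)\bigr)^{2/3}\bigl(\rE\,\exp(3\,Z_2)\bigr)^{1/3},
\]
applied with $Z_1=-\eta(1-0.5\eta)\LS_t'$ and $Z_2=\lambda\FG_t'$; this cleanly separates the two terms at the cost of inflating the exponents by $3/2$ and $3$, which is exactly where the constants $0.25\eta$ (from $\tfrac{2}{3}\cdot 1.5(1-0.5\eta)\eta$ combined with the inequality $e^z-1\leq z+0.5z^2$ for $z\leq 0$ and $\eta\leq 1/(b+1)^2$) and $1.5\lambda^2(b+1)^2$ (from $\tfrac13\cdot\tfrac{(3\lambda)^2(2(b+1))^2}{8}$) come from. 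Your constants happen to match the statement, but without the H\"older step (or some equally explicit device) the per-step inequality is not established, and this is the load-bearing step of the whole proof.
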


Theorem~\ref{thm:bandit} 
holds for contextual bandits with linearly embeddable
payoffs in \eqref{eq:bandit-linear}, which allows infinitely many arms.
With $\Omega_t$ defined in \eqref{eq:omega}, we obtain from
Lemma~\ref{lem:decouple-linear}
\[
\sum_{t=1}^T \dc(x_t,b,\Omega_t,B) \leq   K T .
\]
We may pick $b=1$ and $\eta=0.25$ 
in Theorem~\ref{thm:bandit}.
This implies the following bound
\begin{equation}
  \sum_{t=1}^T \rE  \; \reg_t
  \leq 4 (K+2) T \lambda - \frac{Z_T}{\lambda} .
  \label{eq:regret-bound}
\end{equation}

Note that since $Z_T$ is a constant (ignoring logarithmic factor) for
parametric models, one can set $\lambda = O(\sqrt{1/T})$ to obtain a
$O(\sqrt{T})$ regret. Some detailed examples are presented below. 

\section{Examples}
\label{sec:examples}
We  assume that the optimal value function $f_*(a,x)$ can be well approximated  within $\Omega$.
That is, there is $\theta_* \in \Omega$ so that the model is nearly correctly specified in that
\begin{equation}
\max_{x,a} |f(\theta_*,x,a) - f_*(x,a)| \leq \delta \label{eq:approx-realizability}
\end{equation}
for some small $\delta \in [0,0.5]$.

\subsection{Finite function class}

We set $b=1$, $\eta=0.25$, and
\[
\lambda = \frac{\delta'}{\sqrt{K+2}} + \sqrt{\frac{\ln N}{4(K+2) T}} 
\]
for some $\delta'>0$.
From \eqref{eq:approx-realizability}, with some algebraic
manipulations, we know that
\begin{align*}
 & \rE \; \sum_{s=1}^T -\Delta L(\theta_*,x_s,a_s,r_s) \\
  =&
\rE\; \sum_{s=1}^T     [ - \eta (f(\theta_*,x_s,a_s)-f_*(x_s,a_s))^2+
                                                         \lambda (\min(b,f(\theta_*,x))-f_*(x))]                                                       
                                                            \\
 \geq& -\eta T \delta^2 - \lambda \delta T  
\geq -(\delta+\lambda) \delta T .
\end{align*}
Moreover, we have
$f(\theta_*,x,a) \geq -0.5 \geq -b$ for all $x$ and $a$.

Assume we have $|\Omega|=N$, with prior $1/N$ on each function.
Since $f(\theta_*,x,a) \geq -1$, we know that
\begin{align*}
Z_T \geq 
\rE \; \ln \frac1N \exp\left(-\sum_{s=1}^T \Delta
  L(\theta_*,x_s,a_s,r_s)\right)
\geq 
-(\delta+\lambda)\delta T - \ln N .
\end{align*}

We obtain from \eqref{eq:regret-bound} that
\begin{align*}
  \rE \; \sum_{t=1}^T \reg_t
  \leq& 4 (K+2) T \lambda - \frac{Z_T}{\lambda} \\
  \leq& 4 \sqrt{(K+2) T \ln N}  + 4\big(1+{\delta'/\delta}+{\delta/\delta'}\big)
        \sqrt{(K+2)} \delta T .
\end{align*}

 This result matches that of 
\cite{foster2020beyond}, and thus matches the lower bound for the
finite action case.
Moreover, our result handles infinite action space naturally as long
as they are linearly embeddable.
Some related results can be found in \cite{foster2020adapting}.

 \subsection{Parametric Function Class}
 \label{sec:parametric}
 
 We assume that \eqref{eq:bandit-linear} holds with $\theta \in \rR^d$, and assume that we have a prior on $\rR^d$ with density function $p_0(\theta)$. 
 We again take $b=1$ and take $\eta=0.25$.

 Assume that both $\ln p_0(\theta)$ and $f(\theta,x,a)$ are Lipschitz
 around $\theta_*$. There exists a constant $\gamma>0$ so that for a
 sufficiently large $T$, if we let
$B_T=\{\theta \in \rR^d: \|\theta-\theta_*\|_2 \leq 1/T\}$, then
$\forall \theta \in B_T$:
 \[
 \ln p_0(\theta) \geq \ln p_0(\theta_*)-1 , \qquad \sup_{x,a}
 |f(\theta,x,a)-f(\theta_*,x,a)| \leq \gamma /T \leq 0.5 .
\]
We obtain from \eqref{eq:approx-realizability}  that for all $\theta \in B_T$, 
\[
  |f(\theta,x,a)-f_*(x,a)| \leq \delta + \gamma/T \leq 1 .
\]
It follows that for all $\theta \in B_T$, $f(\theta,x,a) \geq -1$, and
thus $B_T \subset \Omega_T$.

Moreover,  we have for all $\theta \in B_T$:
 \begin{align*}
   -\Delta L(\theta,x_s,a_s,r_s) 
   \geq &
                                        - \eta
     (f(\theta,x_s,a_s)-f_*(x_s,a_s))^2    -                                                     \lambda |\min(b,f(\theta,x))-f_*(x)|     \\
   & \qquad -
                                        2\eta
                                        |f(\theta,x_s,a_s)-f_*(x_s,a_s)|
                                         \cdot |r_s-f_*(x_s,a_s)|\\
\geq&                                        - (1+\lambda +2 |r_s-f_*(x_s,a_s)|)  (\delta+\gamma/T) .
 \end{align*}
 Note that the sub-Gaussian noise condition in Assumption~\ref{assump:reward} implies that $\rE \;
 |r_s-f_*(x_s,a_s)| \leq 0.5$, therefore
 \[
 \rE \inf_{\theta \in B_t} -\Delta L(\theta,x_s,a_s,r_s) \geq - (2+\lambda) (\delta+\gamma/T) .
 \]
 This implies that
\begin{align*}
Z_T = & \rE \ln \rE_{\theta \sim p_0} \exp \left(-\sum_{s=1}^T \Delta L(\theta,x_s,a_s,r_s)\right)\\
\geq & \rE \ln\left[ p_0(B_T) \inf_{\theta \in B} \exp(-\sum_{s=1}^T \Delta L(\theta,x_s,a_s,r_s))\right] \\
\geq & \ln p_0(B_T)  -(2+\lambda) (\delta T + \gamma) \\
\geq & - 1 + \ln p_0(\theta_*) - d \ln (d T) - (2+\lambda)(\delta T + \gamma) .
\end{align*}
  
Now by setting
\[
\lambda = \sqrt{\frac{\delta'}{K+2}} + \sqrt{\frac{d \ln d T}{4(K+2) T}} 
\]
for some $\delta'>0$,
we obtain the following result:
\begin{align*}
  &\sum_{s=1}^T \rE \; \reg_t
    \leq 4 (K+2) T \lambda - \frac{Z_T}{\lambda} \\
  \leq& 4\sqrt{(K+2)d T \ln (d T)} 
             + 2 (1-\ln p_0(\theta_*)+2\gamma) 
      \sqrt{\frac{(K+2) T}{d \ln (d T)}} 
  \\
   &+   4(\sqrt{\delta/\delta'}+\sqrt{\delta'/\delta})\sqrt{(K+2)\delta}
      T  + (\delta T + \gamma) .
\end{align*}

Consider the Gaussian prior case with
\[
  p_0(\theta_*) = \left(\frac{\rho}{2\pi}\right)^{d/2} \exp(- \rho \|\theta_*\|_2^2/2) ,
\]
and $\delta=0$. We obtain a regret bound of
 bound of
 \[
O\left( \sqrt{K d T \ln T} + \rho \|\theta_*\|_2^2 \sqrt{K T/d} -
  (\ln \rho) \sqrt{d KT/\ln (d T)}\right) .
\]
By choosing $\rho=1/T^2$, we obtain a bound
 \[
O\left( \sqrt{K d T \ln T} + \frac{\|\theta_*\|_2^2}{T} \sqrt{K/d} \right) .
\]

For linear bandits, where we
have $w(\theta,x)=\theta$ and $K=d$, 
the bound becomes
\[
  O\left( d \sqrt{T \ln T} + \frac{\|\theta_*\|_2^2}{T} \right) ,
\]
which matches that of \cite{dani2008stochastic,abbasi2011improved}, and thus
is not improvable. In comparison, ignoring logarithmic terms, the
previous results for standard Thompson Sampling in
\cite{agrawal2013thompson,abeille2018improved} led to a frequentist
regret bound of
$\tilde{O}(d^{3/2} \sqrt{T})$,
which is inferior to what we obtain here for \algname\ Thompson
Sampling.

\section{Simulation Study}
\label{sec:simulation}

We use a numerical example to show that the algorithm considered in
this paper can be implemented using standard MCMC sampling techniques. Moreover under appropriate conditions,
\algname\ Thompson Sampling can indeed lead to better regret than standard
Thompson Sampling. This verifies the theoretical analysis.
In this example, we consider a simple non-contextual linear bandit problem,
with $d=100$. Let $\theta_*=[1,1,0,0,\ldots]$ be the optimal
parameter, $a_*=[1,0,0,\ldots]$ be the optimal arm, and $\cA = \{a_*\}
\cup \{[0,a']\}$ with $a' \in \rR^{d-1}$ and $\|a'\|_2=0.2$.
We consider Gaussian prior $p_0(\theta) \sim \exp(-\rho
\|\theta\|_2^2/2)$ with $\rho=100$. If we draw $\theta$ from this
prior, $\rE \|\theta\|_2^2=1$, which is consistent with the fact that
$\|\theta_*\|_2^2=2$. For each $a_t \in \cA$, the observation $r_t$ is generated by
adding a uniform random noise from $[-0.5,0.5]$ to $\theta_*^\top
a_t$.
We implemented \algname\ Thompson Sampling algorithm with $\lambda \in
\{0, 0.01, 0.1, 1\}$, where $\lambda=0$ corresponds to the standard
Thompson Sampling. We set $\eta=1$ in this example, which appears to
be an appropriate choice for this problem. 
For simplicity, we set $b=\infty$ in \eqref{eq:loss}, and run the experiments for $100$
times. We then plot the average regret versus time $t$ in Figure~\ref{fig:ts}. It shows
that for this example, there is a benefit of using the \priorname\ exploration.

Our implementation of the \algname\ Thompson Sampling method employs stochastic gradient Langevin
dynamics (SGLD) \cite{welling2011bayesian}. At each time step $t$, we
select a data point $i \in [t]$ uniformly at random, and use
the following stochastic gradient update rule:
\[
\tilde{\theta} \leftarrow \tilde{\theta} - \delta_t
[\nabla_\theta L(\tilde{\theta},x_i,a_i)- t^{-1}\ln p_0(\tilde{\theta})]
+ \sqrt{2\delta_t/t} \; \epsilon_i ,
\]
with $\epsilon_i \sim N(0,I)$. Here we run $t$ random SGLD updates at each time
$t$, with a fixed learning rate $\delta_t=0.01$.  

\begin{figure}
  \centering
  \includegraphics[width=0.4\textwidth]{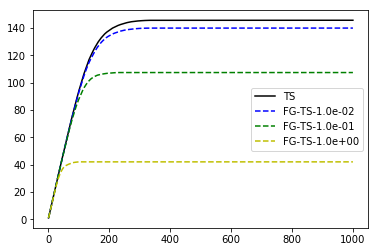}
  \caption{Comparison of \algname\ Thompson Sampling with different
    $\lambda$, where the legend FG-TS-$\lambda$ denotes \algname\
    Thompson Sampling with parameter $\lambda$.
  }
  \label{fig:ts}
\end{figure}

\section{Generalization to Reinforcement Learning}
 
 We consider a simple extension of our analysis to 
 contextual episodic Markov decision process (MDP) with unknown but deterministic transitions, denoted by $M= \mathrm{MDP}(\cS,\cA,H,P,r)$.
Similar contextual MDP models were also studied recently in \cite{jiang2017contextual,CNAAJR18-neurips}.
 Here  $\cS$ and $\cA$ are state and action spaces. The number $H$ is the
 length of each episode. $P=\{p^h\}_{h=1}^H$ and $r=\{r^h\}_{h=1}^H$
 are the state transition probability measures and the random
 rewards. In this work, we assume that the transition probability is
 deterministic (but unknown), and leave the general case to future work. 
 
 The player interacts with this contextual episodic MDP as follows. 
 In each episode $t=1,\ldots,T$, 
 \begin{itemize}
 \item A context $x_t^1 \in \cS^1 \subset \cS$ is picked arbitrarily by an adversary. 
 \item At each step $h =1,2,\ldots,H$
 \begin{itemize}
 \item The player observes the state $x_t^h \in \cS^h \subset \cS$
 \item The player picks a valid action $a_t^h \in \cA$ (we allow a subset of actions in $\cA$ to be valid for each state in $\cS^h$)
 \item The player receives a random reward $r_t^h \in [0,1]$
 \item The player reaches a new state $x_t^{h+1} \in \cS$  deterministically.
 \end{itemize}
 \item The episode terminates after $h=H$ steps.
 \end{itemize}
 

 The goal of MDP is to optimize the expected cumulative rewards:
 \[
 \rE \sum_{s=1}^T \sum_{h=1}^H r_t^h .
\]
It is known that the optimal policy can be derived from the $Q$
function of the MDP, which is denoted by
 $Q^h(x^h,a^h)$ at step $h$ ($h=1,\ldots,H$). It satisfies the Bellman's equation:
 \[
   Q^h(x^h,a^h) = \rE \; [r^h|x^h,a^h] + \max_{a^{h+1}}
   Q^{h+1}(x^{h+1},a^{h+1}) ,
 \]
 where at each state $x^h$ with action $a^h$, we observe reward $r^h$
 and transit to state $x^{h+1}$ deterministically.
 For simplicity, we assume that $Q^{H+1}(\cdot)=0$. 
 We also use the convention
 \[
   V^h(x^h) = \max_{a^h} Q^h(x^h,a^h) .
 \]

 The regret of an MDP algorithm at each time step $t$ is defined as:
 \[
   \reg_t = V^1(x_t^1) - \rE \; \sum_{h=1}^H r_t^h .
 \]

 Consider a set of function classes $\cF \subset \cF^1 \times \cdots
 \times \cF^H \times \cF^{H+1}$, and for each $f \in \cF$, $f^h \in \cF^h: \cS^h \times
 \cA \to \rR$. For notation simplicity, we assume that
 $f^{H+1}(\cdot)=0$, and in order to simplify the notations, we do not introduce a parameter
 $\theta$ into the function definition.  Moreover, we assume that
 $\cS^h$ contains the information of $\cS^1$ so that we can recover
 $x^1 = x^1(x^h) \in \cS^1$  from $x^h \in \cS^h$. This can always be
 made possible by concatenating $[x^1,x^h]$ and regard the result as
 $x^h$. 

We can also define
 \begin{align*}
  a^h(f,x^h) \in  \arg\max_a f^h(x^h,a) , \qquad
 f^h(x^h)=  \max_a f^h(x^h,a) ,
 \end{align*}
and we can  introduce the Bellman operator $\cT$:
 \[
   [\cT  f]^h (x^h,a^h) = \rE [ r^h | x^h, a^h] + f^{h+1}(x^{h+1})  ,
 \]
 where $r^h$ is the observed reward at $(x^h,a^h)$, and $x^{h+1}$ is
 the next state.

\subsection{Thompson Sampling}

 Consider one episode with context $x_t^1$, and sequence $[x_t,a_t,r_t]=\{[x_t^h,a_t^h,r_t^h]\}_{h=1}^H$ which is appropriated generated.
  To apply Thompson Sampling, we define for $f \in \cF$ and for $h= 1, 2,\ldots, H$:
 \begin{align*}
 L^h(f,x_t,a_t,r_t) =& \eta ( f^h(x_t^h,a_t^h) - (r_t^h+ f^{h+1}(x_t^{h+1})))^2 ,
  \end{align*}
 and we let 
 \[
 L^0(f,x_t,a_t,r_t) = -\lambda f^1(x_t^1) .
 \]
 Here $L^0$ is the data dependent \priorname\ exploration prior term, which encourages high quality exploration for reinforcement learning. 
 We note that we do not have to incorporate $f^h(\cdot)$ for $h>1$ because $f^1(\cdot)$ denotes the overall value function of the model, and thus is sufficient for our purpose. 
 
Let the history $S_{t-1}=[(x_{1},a_1,r_1),\ldots,(x_{t-1},a_{t-1},r_{t-1})]$. 
 At episode $t$, we define the posterior of $f$ as
 \begin{equation}
 p(f|S_{t-1}) =
 \frac{p_0(f)
 \exp \left(- \sum_{s=1}^{t-1} \sum_{h=0}^H L^h(f,x_s,a_s,r_s) \right)}
 {\rE_{f \sim p_0(f)}
 \exp \left(- \sum_{s=1}^{t-1}  \sum_{h=0}^H L^h(f,x_s,a_s,r_s) \right)}
 . \label{eq:posterior-rl}
 \end{equation}
 
 Given $f_t$ drawn from the posterior, the Thompson Sampling method employs the associated greedy policy $\pi(f_t)$ as: at step $h$ and state $x^h$, we pick an action that maximizes the value according to the model
 \begin{equation}
 \pi(f_t): \qquad a^h = a^h(f_t,x^h) . \label{eq:rl-greedy-policy}
 \end{equation}
 This policy, conditioned on the context $x_t^1 \in \cX$, induces a distribution on $[x_t,a_t,r_t]$, which we denote as 
 \[
 p(\cdot|f_t,x_t^1) :  \quad \text{distribution induced by $\pi(f_t)$} .
 \]
 The Thompson Sampling algorithm for RL is given in Algorithm~\ref{alg:ts-rl}.
 
 \begin{algorithm}[H]
\caption{Thompson Sampling for Contextual MDP}
\label{alg:ts-rl}
  \begin{algorithmic}[1]
  \FOR{$t=1,2,\ldots,T$}
  \STATE  Observe $x_t^1 \in \cS$
  \STATE  Draw $f_t \sim p(\cdot|S_{t-1})$ according to \eqref{eq:posterior-rl} 
  \STATE  Play episode $t$ using the greedy policy $\pi(f_t)$ of \eqref{eq:rl-greedy-policy}
 \STATE   Observe trajectory $[x_t,a_t,r_t]$ 
  \ENDFOR
\end{algorithmic}
\end{algorithm}

 \subsection{Regret Analysis}

 We have the following assumption.
 \begin{assumption}[Realizability]
   Assume that $Q \in \cF$.
   \label{assump:realizable}
 \end{assumption}

 The following assumption extends \eqref{eq:bandit-linear} for
 contextual bandits. It is a relatively strong assumption, but similar
 assumptions were also 
 required in related works such as \cite{jiang2017contextual,jin2020provably}.
\begin{assumption}
A contextual MDP $M$ is linearly embeddable, if given any context $x^1 \in \cX$, and $h \in [H]$, 
we have the following representation for $f \in \cF$:
\begin{align}
f^h(x^h,a^h)-   [\cT f]^h(x^h,a^h) =& w^h(f,x^1(x^h))^\top \phi^h(x^h,a^h) , \label{eq:rl-linear}
\end{align}
where $\phi^h(x^h,a^h) \in \rR^K$.
\label{assump:mdp-linear}
\end{assumption}

This decomposition can be regarded as a context and action dependent version of the Bellman factorization and Bellman rank in \cite{jiang2017contextual}, and our definition can naturally handle infinite action spaces.  
The linearly embeddable condition generalizes the linear MDP model of
\cite{jin2020provably}, where we allow contextually dependent weights
that can be a nonlinear function with unknown embedding to be learned.
For simplicity, in the analysis, we will avoid dealing with range conditions for RL
by imposing the following conditions directly. Alternatively, we may also
employ the truncation technique used in our bandit analysis to handle out
of range function values.
\begin{assumption}
We assume that there exists $b \geq 0$ so that for all $f \in \cF$:
\[
f^1(x^1)  \in [0, H b] .
\]
Moreover, we assume that for all $h \geq 1$:
\[
f^h(x^h,a^h) - [\cT f]^h (x^h,a^h) \in [-b,b] .
\]
\label{assump:rl-value-range}
\end{assumption}

 Next, we will use the following key observation, referred to as the value-function error decomposition in \cite{jiang2017contextual}.
 Given any $f_t \in\cF$ and $x_t^1$. Let $[x_t,a_t,r_t]\sim p(\cdot|f_t,x_t^1)$ be the trajectory of the greedy policy $\pi(f_t)$, we have
  \begin{align}
  \reg_t 
   =
  \rE_{[x_t,a_t,r_t]\sim p(\cdot|f_t,x_t^1)}
 \sum_{h=1}^H  \underbrace{f_t^h(x_t^h,a_t^h)-[\cT f]^h(x_t^h,a_t^h)}_{\BE_t^h}
  - \underbrace{\left[ f^1(x^1) - V^1(x^1)\right]}_{\FG_t} , \label{eq:reg-rl-decomp}
  \end{align}
  where
  \[
    \reg_t= V^1(x^1) - \rE_{[x_t,a_t,r_t] \sim p(\cdot|f_t,x_t^1)}
    \left[\sum_{h=1}^H r_t^h\right] .
  \]
 With the above decomposition, we may introduce the decoupling
 coefficient for MDP below, which generalizes Definition~\ref{def:dc}
 for contextual bandits. For simplicity, we only consider the case
 of $b=\infty$ in Definition~\ref{def:dc}. 
  \begin{definition}[Decoupling Coefficient]
 Consider a contextual MDP $M$. Given any $q(f)$ on $\cF$. Let
 \[
 \pi_q(x^h,a^h|x^1)=\rE_{f \sim q(\cdot)} p(x^h,a^h|f,x^0) .
 \]
 Then $\dc(\cF,M)$ is the smallest quantity $K$ so that
 for all $h \geq 1$:
 \begin{align*}
 &\rE_{f \sim q(f)}
 \rE_{[x^h,a^h]\sim p(\cdot|f,x^1)}
 [f^h(x^h,a^h)- [\cT f]^h(x^h,a^h)]\\
 \leq& \inf_{\mu>0} \left[\mu \rE_{[x^h,a^h]\sim \pi_q(\cdot|x^1)}
 \rE_{\tilde{f} \sim q(\cdot)}
  ( \tilde{f}^h(x^h,a^h)- [\cT \tilde{f}]^h (x^h,a^h))^2 +
       \frac{K}{4\mu}\right] .
 \end{align*}
 \label{def:dc-rl}
\end{definition}

Using Definition~\ref{def:dc-rl}, we can obtain the following  regret bound
 from \eqref{eq:reg-rl-decomp}:
\begin{align}
  \rE \reg_t \leq & \mu \rE \; \rE_{\tilde{f} \sim p(\cdot|S_{t-1})} \sum_{h=1}^H  \LS_t^h + \frac{\dc(\cF,M) H}{4 \mu} - \rE
                    \; \FG_t , \label{eq:rl-regret}
 \end{align}
 where
 \[
   \LS_t^h = ( \tilde{f}^h(x_t^h,a_t^h)- [\cT \tilde{f}]^h (x_t^h,a_t^h))^2  .
 \]
 Note that \eqref{eq:rl-regret} is an analogy of \eqref{eq:bandit-regret}.
 The following lemma is a generalization of Lemma~\ref{lem:decouple-linear} for contextual bandits. The proof is essentially the same. 
 \begin{lemma}
 Assume that the linear embedding in Assumption~\ref{assump:mdp-linear} holds for the contextual MDP $M$.
 Then $\dc(\Omega,M) \leq K$. 
  \label{lem:rl-decouple}
 \end{lemma}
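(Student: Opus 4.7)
The plan is to follow the same pattern as the proof of Lemma~\ref{lem:decouple-linear} for linearly embeddable bandits, with the per-step Bellman residual $f^h(x^h,a^h)-[\cT f]^h(x^h,a^h)$ playing the role of the reward residual $f_b(\theta,x,a)-f_*(x,a)$ and the feature $\phi^h(x^h,a^h)$ playing the role of $\phi(x,a)$. Fix a context $x^1$, a step index $h\geq 1$, and a distribution $q$ over $\cF$. First I would invoke Assumption~\ref{assump:mdp-linear} to write the per-step Bellman residual as $w^h(f,x^1)^\top\phi^h(x^h,a^h)$, so that the left hand side of the inequality in Definition~\ref{def:dc-rl} becomes $\rE_{f\sim q}\,w^h(f,x^1)^\top\psi_f$, where $\psi_f=\rE_{[x^h,a^h]\sim p(\cdot|f,x^1)}\phi^h(x^h,a^h)$.

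Let $\Sigma_h=\rE_{[x^h,a^h]\sim\pi_q(\cdot|x^1)}\phi^h(x^h,a^h)\phi^h(x^h,a^h)^\top$; by the definition of $\pi_q$, one has $\Sigma_h=\rE_{f\sim q}\rE_{[x^h,a^h]\sim p(\cdot|f,x^1)}\phi^h\phi^{h\top}$. Next I would apply the matrix-weighted AM-GM inequality $u^\top v\leq \mu\,u^\top\Sigma_h u + (4\mu)^{-1}\,v^\top\Sigma_h^\dagger v$ with $u=w^h(f,x^1)$ and $v=\psi_f$, and then average over $f\sim q$. The first term becomes $\mu\,\rE_{[x^h,a^h]\sim\pi_q(\cdot|x^1)}\rE_{f\sim q}(f^h(x^h,a^h)-[\cT f]^h(x^h,a^h))^2$, which is exactly the least squares side of Definition~\ref{def:dc-rl}.

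The remaining step is to bound the second term by $K/(4\mu)$. Jensen's inequality applied to the convex map $u\mapsto(u^\top\phi^h)^2$ yields $\psi_f\psi_f^\top\preceq\rE_{[x^h,a^h]\sim p(\cdot|f,x^1)}\phi^h\phi^{h\top}$, and averaging over $f\sim q$ gives $\rE_{f\sim q}\psi_f\psi_f^\top\preceq\Sigma_h$. Hence
\[
\rE_{f\sim q}\psi_f^\top\Sigma_h^\dagger\psi_f=\mathrm{tr}\bigl(\Sigma_h^\dagger\,\rE_{f\sim q}\psi_f\psi_f^\top\bigr)\leq\mathrm{tr}(\Sigma_h^\dagger\Sigma_h)\leq K.
\]
Taking the infimum over $\mu>0$ verifies Definition~\ref{def:dc-rl} with constant $K$, as required.

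The main technical nuisance, rather than a real obstacle, is the possible rank deficiency of $\Sigma_h$. It is handled either by regularizing to $\Sigma_h+\varepsilon I$ and sending $\varepsilon\to 0$, or by using the Moore--Penrose pseudoinverse together with the observation that every $\psi_f$ in the support of $q$ lies in the column span of $\Sigma_h$ (since $\psi_f$ is a convex combination of feature vectors that themselves appear with positive probability in $\pi_q$). No genuinely new ideas beyond the bandit case are needed: the value-function error decomposition \eqref{eq:reg-rl-decomp} has already reduced the MDP regret, at a fixed level $h$, to a single-step quantity of exactly the form treated in Lemma~\ref{lem:decouple-linear}, and the linear embedding makes the same AM-GM plus trace calculation go through verbatim.
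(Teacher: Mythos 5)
Your proof is correct and follows essentially the same route as the paper, which gives no separate argument for this lemma and simply defers to the proof of Lemma~\ref{lem:decouple-linear}: use the linear embedding to write the Bellman residual as a bilinear form $w^h(f,x^1)^\top\phi^h$, apply an AM--GM/Cauchy--Schwarz step weighted by the feature second-moment matrix $\Sigma_h$ under $\pi_q$, and recover the decoupled least-squares term plus $K/(4\mu)$. The only differences are cosmetic bookkeeping --- you use the matrix inequality with $\Sigma_h^\dagger$ and a trace bound where the paper diagonalizes $\Sigma$ and applies Young's inequality per eigencoordinate with weights built from the weight-residual second moments --- and your Jensen step $\psi_f\psi_f^\top\preceq\rE\,\phi^h\phi^{h\top}$ is actually an equality here since the transitions are deterministic, so it costs nothing while also covering the stochastic case.
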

 
Using the above definition, we can use the same proof technique as that of contextual bandits to
obtain a regret bound for reinforcement learning. The proof is given in the \suppl.

\begin{theorem}
  Consider Algorithm~\ref{alg:ts-rl}
  with posterior \eqref{eq:posterior-rl}.
Under Assumption~\ref{assump:realizable},
Assumption~\ref{assump:mdp-linear}, and
Assumption~\ref{assump:rl-value-range}. 
Let $\eta \leq \min(0.25,1/(H b^2))$.
Then we have the following expected regret bound:
\begin{align*}
\sum_{t=1}^T \rE  \; \reg_t
\leq&
\frac{\lambda TH}{\eta}  \dc(\cF,M)
      + (0.25 \eta/\lambda) \sum_{t=1}^T \sum_{h=1}^H \; \rE \; \LS_t^h - \sum_{t=1}^T \rE \; \FG_t \\
\leq&  \frac{\lambda TH}{\eta} \dc(\cF,M)
+1.5 \lambda H^2 b^2 T
- \frac{Z_T }{\lambda} ,
\end{align*}
where
\begin{align}
&Z_t  
  = \rE_{S_t} \ln \rE_{\tilde{f} \sim p_0} 
                \exp \left(- \sum_{s=1}^t \sum_{h=0}^H \Delta L^h(\tilde{f},x_s,a_s,r_s)\right)
                \label{eq:rl-log-part}  \\
  &\Delta L^h(f,x,a,r)=\eta [(f^h(x^h,a^h)-(r+ f^{h+1}(x^{h+1}))^2-
    (r- \rE [r|x,a])^2] \quad (h \geq 1) \nonumber \\
  & \Delta L^0(f,x,a,r)  = -\lambda [f(x^1)-V(x^1)] . \nonumber 
\end{align}
\label{thm:regret-rl}
\end{theorem}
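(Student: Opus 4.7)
The plan is to mirror the two-step template of Theorem~\ref{thm:bandit} in the episodic MDP setting: first reduce the per-episode regret to an online least-squares estimation problem via the value-function error decomposition and the MDP decoupling coefficient, then control the resulting weighted sum of squared Bellman residuals (minus the Feel-Good surplus) by the log-partition function $Z_T$ using online aggregation.

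For the first displayed inequality, I apply the value-function error decomposition \eqref{eq:reg-rl-decomp} in each episode, take the posterior expectation of $f_t\sim p(\cdot|S_{t-1})$, and invoke the MDP decoupling coefficient bound Lemma~\ref{lem:rl-decouple} together with \eqref{eq:rl-regret}. Choosing $\mu=\eta/(4\lambda)$ in \eqref{eq:rl-regret} and summing over $t=1,\ldots,T$ immediately yields the claim, since then $\mu\sum_h\rE\,\LS_t^h=(0.25\eta/\lambda)\sum_h\rE\,\LS_t^h$ and the decoupling reciprocal contributes exactly $\lambda H\dc(\cF,M)/\eta$ per episode.

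For the second inequality I use online aggregation on the normalizing constants of \eqref{eq:posterior-rl}. Writing $V_t=\rE_{f\sim p_0}\exp\bigl(-\sum_{s<t}\sum_{h=0}^{H}\Delta L^h(f,x_s,a_s,r_s)\bigr)$, the telescoping $Z_T=\rE\ln V_{T+1}=\sum_{t=1}^{T}\rE\ln\rE_{\tilde f\sim p(\cdot|S_{t-1})}\exp\bigl(-\sum_{h=0}^{H}\Delta L^h(\tilde f,x_t,a_t,r_t)\bigr)$ reduces the task to upper bounding the per-step log-MGF. I split the exponent into two pieces: (a) the Feel-Good piece $-\Delta L^0(\tilde f,\cdot)=\lambda[\tilde f^1(x_t^1)-V^1(x_t^1)]\in[-\lambda Hb,\lambda Hb]$ by Assumption~\ref{assump:rl-value-range}, which contributes $\lambda\rE_{p_t}\FG_{t,\tilde f}$ plus a Hoeffding correction of order $\lambda^2 H^2 b^2$; and (b) the Bellman pieces, which via the algebraic identity $-\sum_{h\ge 1}\Delta L^h=-\eta\sum_h u_h(\tilde f)^2+2\eta\sum_h u_h(\tilde f)\xi_h$ with $u_h(\tilde f)=\tilde f^h-[\cT\tilde f]^h\in[-b,b]$ and $\xi_h=r_t^h-\rE[r^h|x_t^h,a_t^h]$ sub-Gaussian with proxy $1/4$ (Assumption~\ref{assump:reward}), yield a conditional MGF correction $\eta^2\sum_h u_h(\tilde f)^2/2\le 0.5\eta\sum_h u_h(\tilde f)^2$ once $\eta\le 1/(Hb^2)$ is invoked, so that half of the available $-\eta\sum_h u_h(\tilde f)^2$ survives and Jensen on the remaining exponential further retains $-0.25\eta\sum_h\rE_{p_t}u_h(\tilde f)^2$. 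Assembling and summing over $t$ gives $Z_T\le\lambda\sum_t\rE\,\FG_t-0.25\eta\sum_{t,h}\rE\,\LS_t^h+1.5\lambda^2 H^2 b^2 T$, which dividing by $\lambda$ and rearranging is precisely the desired second inequality.

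The main obstacle is Step 3: disentangling the three nested layers of randomness inside the log-MGF --- the dummy posterior sample $\tilde f$, the independent Thompson sample $f_t$ driving the trajectory $(x_t,a_t,r_t)$, and the per-step reward noise $\xi_{t,h}$ --- so that the sub-Gaussian moment bound is taken on $\xi_{t,h}$ conditionally on the rest and the Hoeffding estimate is applied uniformly in $\tilde f$ on the bounded Feel-Good and Bellman-residual exponents. Assumption~\ref{assump:rl-value-range} is what lets us avoid the truncation trick (the $f_b$ clipping used in the bandit proof) and what allows the $H$ Bellman residuals along a single episode, each of range $b$, to be combined into a single Hoeffding-type step of range $Hb$, producing the $H^2 b^2$ factor in the $1.5\lambda^2 H^2 b^2 T$ remainder.
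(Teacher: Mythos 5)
Your proposal follows essentially the same route as the paper: the first inequality via the value-function error decomposition \eqref{eq:reg-rl-decomp}, Lemma~\ref{lem:rl-decouple} and \eqref{eq:rl-regret} with $\mu=0.25\eta/\lambda$, and the second via telescoping $Z_T$ and a per-episode log-MGF bound that splits the \algname\ term (Hoeffding, giving $1.5\lambda^2H^2b^2$) from the Bellman residuals (conditional sub-Gaussian reward noise integrated out step by step along the deterministic trajectory, retaining $-0.25\eta\sum_h\rE\,\LS_t^h$) --- this is exactly Lemmas~\ref{lem:rl-exp} and~\ref{lem:rl-one-step}. The only imprecision is where you invoke $\eta\le 1/(Hb^2)$: it is not needed for $\eta^2 u_h^2/2\le 0.5\eta u_h^2$ (which only needs $\eta\le 1$), but rather in the subsequent H\"older/Jensen splitting step, where the exponent of each $\LS_t^h$ is inflated by a factor of $1.5H$ and the quadratic remainder of $e^z-1\le z+0.5z^2$ must be absorbed using $\LS_t^h\le b^2$.
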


To interpret the regret bound, we note that if $H b^2 \geq 4$ and we take $\eta=1/(H
b^2)$, then the bound becomes
\[
\sum_{t=1}^T \reg_t \leq \lambda (K+1.5) H^2 b^2 T
-\frac{1}{\lambda} \rE \ln   \rE_{\tilde{f} \sim p_0(\tilde{f})
 }
 \exp \left(- \sum_{s=1}^{T} \sum_{h=0}^H \Delta L^h(\tilde{f},[x_s,a_s,r_s) \right) .
\]
For $d$ dimensional parametric function class, then, similar to
Section~\ref{sec:parametric}, we have
\[
\ln   \rE_{\tilde{f} \sim p_0(\cdot)} \exp \left(- \sum_{s=1}^{T}
    \sum_{h=0}^H \Delta L^h(\tilde{f},x_s,a_s,r_s) \right)
  = O(-\ln p_0(Q) + H d \ln (H d T)) ,
\]
where $Q \in \cF$ is the true value function.
By taking
\[
  \lambda = \sqrt{\frac{d \ln (HdT)}{(K+2)H b^2 T}} ,
\]
we obtain
\[
  \sum_{t=1}^T \reg_t = O \left( \sqrt{ d (K+2) H^3 b^2 T \ln (H d T)}
      - \ln p(Q) \sqrt{(K+2)H b^2 T/(d \ln (Hd T))}\right) ,
  \]
which is similar to the contextual bandit case, and similar to results of \cite{jin2020provably} for linear MDPs. 

 \section{Conclusion}
 
 This paper presents a general analysis of Thompson Sampling. Contrary to the conventional thinking that the random sampling of Thompson Sampling leads to sufficient exploration, we showed that the standard choice of likelihood function in Thompson Sampling can be suboptimal due to the lack of aggressiveness to encourage optimistic exploration. 
  To remedy this problem, we proposed a modification of Thompson Sampling with an additional \priorname\ exploration term. The resulting method can be viewed as an implementation of the general optimism in the face of uncertainty principle for Thompson Sampling. 
  It was shown that this method led to minimax optimal regret bound for the general contextual bandit problem with finite actions.  
 Moreover, we extended the analysis to handle infinite actions when the action space is linearly embeddable, with regret bound matching known lower bounds. 
 
 We also demonstrated that this new theoretical framework for Thompson
 Sampling can be extended to the reinforcement learning setting. Our
 analysis of \priorname\ exploration employs  a new proof technique
 using decoupling coefficient to handle exploration. It reduces the
 online regret bound analysis into an online least squares estimation
 problem, which in spirit is similar to \cite{foster2020beyond}. We
 then bound the online least squares loss using aggregation
 techniques. In comparison, the technique of  \cite{foster2020beyond}
 cannot be directly generalized to handle reinforcement learning, as
 noted by the authors there. 
While this paper considers the simple case of unknown deterministic transition
dynamics
for reinforcement learning, more general situation with random
transitions can also be handled. We leave detailed studies to future work. 

As pointed out in Section~\ref{sec:related}, bounds obtained in this
paper may not be optimal for certain structured bandit problems. It
will be interesting to explore whether such structures can be
incorporated into our analysis to improve the resulting bounds. 



\appendix
\section{Proof of Theorem~\ref{thm:bandit}}
\label{apx:proof-bandit}.

We use the following estimate, which directly follows from the
sub-Gaussian definition of noise.
\begin{lemma}
Consider $L(\cdot)$ defined in \eqref{eq:loss}.
If $\eta (b+1)^2\leq 1$, then 
\begin{align*}
&\ln \rE_{\tilde{\theta} \sim p(\cdot|x_t,S_{t-1})} \rE_{r_t|x_t,a_t} \;
\exp(-\Delta L(\tilde{\theta},x_t,a_t,r_t)) \\
\leq&  - 0.25 \eta \rE_{\tilde{\theta} \sim p(\cdot|x_t,S_{t-1})} 
      \LS_t + \lambda \rE_{\theta_t \sim p(\cdot|x_t,S_{t-1})} \FG_t    + 1.5 \lambda^2 (b+1)^2 .
\end{align*}
\label{lem:exp}
\end{lemma}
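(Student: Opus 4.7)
The plan is to prove the bound in three stages. First, I would integrate out $r_t$ using sub-Gaussianity, reducing the quadratic-in-$r$ part of $\Delta L$ to a deterministic function of $\tilde{\theta}$. Second, I would exploit truncation inequalities to convert $f$ to $f_b$ in the exponent, producing a clean pointwise-in-$\tilde{\theta}$ bound of the form $\exp(-0.25\eta\,\LS_t + \lambda\,\FG(\tilde{\theta}))$. Third, I would pass from the $\tilde{\theta}$-moment-generating function to the linear bound via Hoeffding's lemma, extracting a correction quadratic in $\lambda$.

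For the first stage, I write $\Delta := f(\tilde{\theta},x_t,a_t) - f_*(x_t,a_t)$ and $\epsilon := r_t - f_*(x_t,a_t)$, so that $(f(\tilde{\theta},x_t,a_t) - r_t)^2 - (f_*(x_t,a_t) - r_t)^2 = \Delta^2 - 2\Delta\epsilon$. Since the feel-good term $\lambda[\min(b,f(\tilde{\theta},x_t)) - f_*(x_t)]$ does not depend on $r_t$, it factors out of $\rE_{r_t}$, and Assumption~\ref{assump:reward} gives $\rE_{r_t}\exp(2\eta\Delta\epsilon) \leq \exp(\eta^2\Delta^2/2)$. Combined, this yields $\rE_{r_t}\exp(-\Delta L) \leq \exp(-\eta\Delta^2(1-\eta/2) + \lambda[\min(b,f(\tilde{\theta},x_t)) - f_*(x_t)])$. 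For the second stage, I use two elementary truncation facts: since $f_*(x,a) \in [0,1] \subset [-b,b]$, the truncation $f_b$ never increases the absolute gap to $f_*$, so $\Delta^2 \geq \LS_t$; and since $f_b(\theta,x) = \max(-b,\min(b,f(\theta,x)))$, we have $\min(b,f(\tilde{\theta},x_t)) \leq f_b(\tilde{\theta},x_t)$, so the feel-good factor obeys $\lambda[\min(b,f(\tilde{\theta},x_t)) - f_*(x_t)] \leq \lambda\,\FG(\tilde{\theta})$. Together with $1-\eta/2 \geq 1/4$ (which follows from the hypothesis $\eta(b+1)^2 \leq 1$), this produces the pointwise inequality $\rE_{r_t}\exp(-\Delta L) \leq \exp(-0.25\eta\,\LS_t + \lambda\,\FG(\tilde{\theta}))$.

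For the third stage, I take $\ln\rE_{\tilde{\theta}}$ on both sides and set $W := -0.25\eta\,\LS_t + \lambda\,\FG$. Using $|\FG| \leq b+1$ and $\LS_t \in [0,(b+1)^2]$, the range of $W$ is at most $2\lambda(b+1) + 0.25\eta(b+1)^2$; Hoeffding's lemma applied to the centered variable $W - \rE W$ then gives $\ln\rE\exp(W) - \rE W \leq (\text{range})^2/8$. The principal obstacle lies in showing this correction is dominated by $1.5\lambda^2(b+1)^2$: the naive expansion produces a clean $\lambda^2(b+1)^2$ piece together with residuals from the $\LS_t$ fluctuation and from the cross term between $\lambda(b+1)$ and $\eta(b+1)^2$. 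I would absorb these residuals by using $(a+c)^2 \leq 2a^2 + 2c^2$, the identity $\eta^2(b+1)^4 = (\eta(b+1)^2)^2 \leq \eta(b+1)^2 \leq 1$, and the bound $(b+1)^2 \geq 4$ (valid for $b \geq 1$ in the applications), so that everything collapses into the stated $1.5\lambda^2(b+1)^2$ with appropriate slack.
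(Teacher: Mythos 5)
Your first two stages reproduce the paper's argument: integrate out $r_t$ via sub-Gaussianity to get $\rE_{r_t}\exp(-\Delta L)\leq\exp(-\eta(1-0.5\eta)\Delta^2+\lambda\FG_t')$, then use $\Delta^2\geq \LS_t$ (truncation toward $[-b,b]\supset[0,1]\ni f_*$ only shrinks the gap) and $\min(b,f)\leq f_b$. Those steps are fine.

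The gap is in stage three. You propose to reduce \emph{pointwise} to $\exp(W)$ with $W=-0.25\eta\,\LS_t+\lambda\,\FG$ and then apply Hoeffding's lemma to the combined variable $W$. The correction $(\mathrm{range}(W))^2/8$ then contains, besides the benign $O(\lambda^2(b+1)^2)$ piece, a term of order $\eta^2(b+1)^4$ from the $\LS$ fluctuation and a cross term of order $\lambda\eta(b+1)^3$. Under $\eta(b+1)^2\leq 1$ these are bounded by absolute constants and by $O(\lambda(b+1))$ respectively, but neither is $O(\lambda^2(b+1)^2)$: for the small $\lambda=O(1/\sqrt{T})$ used downstream they dominate $1.5\lambda^2(b+1)^2$, and after the $\lambda^{-1}$ rescaling in Theorem~\ref{thm:bandit} they would contribute $O(T/\lambda)$ and $O((b+1)T)$ to the regret, destroying the result. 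You also cannot evade this by dropping the $\LS$ part from the range, because the full coefficient $-0.25\eta\,\rE\,\LS_t$ must survive intact (it is needed to cancel the $+(0.25\eta/\lambda)\rE\,\LS_t$ term in Lemma~\ref{lem:one-step}), and any Jensen-gap correction taken jointly over $W$ eats into it. The identities you cite ($(a+c)^2\leq 2a^2+2c^2$, $\eta^2(b+1)^4\leq 1$, $(b+1)^2\geq 4$) only show the residual is a bounded constant, not that it vanishes with $\lambda$.

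The paper's route avoids this by \emph{separating} the two exponents with H\"older's inequality,
$\ln\rE\exp(Z_1+Z_2)\leq\tfrac23\ln\rE\exp(1.5Z_1)+\tfrac13\ln\rE\exp(3Z_2)$, and, crucially, by \emph{not} collapsing the $\LS$ coefficient to $0.25\eta$ at the pointwise stage. It keeps $-(1-0.5\eta)\eta\,\LS_t'$, so that after the factor $1.5$ from H\"older and the second-order loss from $e^{-z}\leq 1-z+0.5z^2$ (controlled via $\LS_t\leq(b+1)^2$ and $\eta\leq 1/(b+1)^2$) the surviving coefficient is still $\tfrac23\cdot 0.375\eta=0.25\eta$, with \emph{no} positive additive residual from the $\LS$ block. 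The Hoeffding/Chernoff correction is then applied only to the bounded $\FG$ block, where $\tfrac13\cdot\tfrac{(3\lambda)^2(2(b+1))^2}{8}=1.5\lambda^2(b+1)^2$ gives exactly the stated constant. If you want to salvage your outline, replace the joint Hoeffding step with this H\"older split and restore the slack $(1-0.5\eta)\eta$ in front of $\LS$ before taking the $\tilde\theta$-expectation.
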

\begin{proof}
  Let $\epsilon_t = r_t - f_*(x_t,a_t)$.
  Using Assumption~\ref{assump:reward}, we have
  \begin{align}
    \rE_{r_t|x_t,a_t} \exp( -2 \eta
    \epsilon_t(f_*(x_t,a_t)-f(\tilde{\theta},x_t,a_t)) )
    \leq \exp(0.5 \eta^2 \LS_t') . \label{eq:lem-exp-proof-1}
  \end{align}
  Therefore
  \begin{align*}
    &-\Delta L(\tilde{\theta},x_t,a_t,r_t) \\
    = &
    -\eta (\epsilon_t+f_*(x_t,a_t)-f(\tilde{\theta},x_t,a_t))^2 + \eta
    (\epsilon_t)^2+ \lambda [\min(b,f(\tilde{\theta},x_t))-f_*(x_t)]\\
    \leq &
       -2 \eta \epsilon_t(f_*(x_t,a_t)-f(\tilde{\theta},x_t,a_t)) - \eta \LS_t' +
       \lambda  \FG_t' ,
  \end{align*}
  where
  \begin{align*}
    \LS_t'= (f(\tilde{\theta},x_t,a_t)-f_*(x_t,a_t))^2 ,
\quad    \FG_t' = f_b(\tilde{\theta},x_t)-f_*(x_t) .
  \end{align*}
  
  Now using \eqref{eq:lem-exp-proof-1}, we obtain:
  \[
    \rE_{r_t |x_t , a_t}\; 
    \exp(-\Delta L(\tilde{\theta},x_t,a_t,r_t)) 
    \leq \exp (  - (1-0.5 \eta) \eta \LS_t' + \lambda \FG_t') .
  \]
  Therefore
  \begin{align}
   & \ln \rE_{\tilde{\theta} \sim p(\cdot|x_t,S_{t-1})} \; \rE_{r_t|x_t,a_t} \; \exp(-\Delta L(\tilde{\theta},x_t,a_t,r_t)) \nonumber\\
    \leq& \ln \rE_{\tilde{\theta} \sim p(\cdot|x_t,S_{t-1})}  \; \exp
          ( -(1-0.5\eta)\eta \LS_t' + \lambda \FG_t')\nonumber
    \\
    \leq& \frac{2}{3} \ln \rE_{\tilde{\theta} \sim p(\cdot|x_t,S_{t-1})} \; \exp ( -1.5(1-0.5\eta)\eta \LS_t') +\frac{1}{3} \ln
          \rE_{\tilde{\theta}\sim p(\cdot|x_t,S_{t-1})} \; \exp(3 \lambda
          \FG_t') . \label{eq:lem-exp-proof-2}
  \end{align}
  The last inequality follows from $\rE_{Z_1,Z_2} \exp(Z_1 +
  Z_2) \leq \rE_{Z_1}^{2/3} \exp(1.5 Z_1) \rE_{Z_2}^{1/3} \exp(3
  Z_2)$,  which follows from the H\"{o}lder's inequality.
  
  Observe that $\psi(z) = (e^z-z-1)/z^2$ is an increasing function
  in $z$, $\psi(0) =0.5$, we have
  $\exp(z)-1 \leq z + 0.5 z^2$ for $z \leq 0$. Therefore
  \begin{align*}
&   \rE_{\tilde{\theta}} \; \exp ( -1.5(1-0.5\eta)\eta \LS_t) -1\\
    \leq &  -1.5(1-0.5\eta)\eta \rE_{\tilde{\theta}}  \LS_t + 1.5
           \cdot 0.75 (1-0.5\eta)^2\eta^2
   \rE_{\tilde{\theta}}  \LS_t^2\\
    \leq & -1.5(1-0.5\eta) \eta (1-0.75(1-0.5\eta)(b+1)^2\eta)  \rE_{\tilde{\theta}} \LS_t
      \leq - 0.375 \eta \rE_{\tilde{\theta}} \LS_t .
  \end{align*}
  The second inequality used $\LS_t \leq (b+1)^2$. The last inequality
  used $\eta \leq 1/(1+b)^2 \leq 1/4$, and thus
$(1-0.5\eta)  (1-0.75(1-0.5\eta)(b+1)^2\eta)   \geq
(1-0.5\eta)(1-0.75(1-0.5\eta)) \geq 0.25$.
Now using $\ln z \leq z-1$, and $-\LS_t' \leq -\LS_t$, we obtain
  \begin{align}
   & \frac23 \ln \rE_{\tilde{\theta}} \; \exp ( -1.5(1-0.5\eta)\eta \LS_t')\nonumber\\
    \leq& \frac23 \left[ \rE_{\tilde{\theta}} \; \exp ( -1.5(1-0.5\eta)\eta \LS_t')
      -1 \right]\nonumber\\
        \leq& \frac23 \left[ \rE_{\tilde{\theta}} \; \exp ( -1.5(1-0.5\eta)\eta \LS_t)
      -1 \right]
    \leq - 0.25 \eta \rE_{\tilde{\theta}} \LS_t . \label{eq:lem-exp-proof-3}
  \end{align}
  Moreover, since $|\FG_t'| \leq b+1$, we obtain from Chernoff bound that
  \[
    \frac{1}{3} \ln
    \rE_{\tilde{\theta}} \; \exp(3 \lambda \FG_t') \leq
    \lambda \rE_{\tilde{\theta}}  \; \FG_t' + 1.5 \lambda^2 (b+1)^2
   =     \lambda \rE_{\theta_t \sim p(\cdot|x_t,S_{t-1})} \; \FG_t + 1.5
   \lambda^2 (b+1)^2 .
 \]
 Plug this inequality and \eqref{eq:lem-exp-proof-3} into
 \eqref{eq:lem-exp-proof-2}, 
  we obtain the desired bound.
\end{proof}

The following lemma is standard in the analysis of online aggregation methods.
\begin{lemma}
We have
\[
  (0.25 \eta/\lambda) \rE \LS_t
  - \rE \FG_t
  \leq  1.5 \lambda (b+1)^2 + \lambda^{-1} ( Z_{t-1}-Z_t) .
\]
\label{lem:one-step}
\end{lemma}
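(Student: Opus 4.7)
My plan is to derive the inequality by combining the single-step exponential bound of Lemma~\ref{lem:exp} with a telescoping identity that rewrites $Z_t - Z_{t-1}$ as the logarithm of the posterior expectation of $\exp(-\Delta L(\tilde\theta, x_t, a_t, r_t))$. This is the standard online-aggregation / exponential-weights potential argument, adapted to account for the fact that $\Delta L$ differs from the loss $L$ appearing in the posterior by a $\theta$-independent additive term.

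First, I would establish the telescoping identity. Using the definition of the posterior \eqref{eq:posterior2} (or \eqref{eq:posterior} in the simpler setting), the ratio
\[
\rE_{\tilde\theta \sim p(\cdot|x_t,S_{t-1})} \exp\bigl(-\Delta L(\tilde\theta, x_t, a_t, r_t)\bigr)
\]
equals the quotient of two partition-type integrals against the prior $p_0$, weighted by $I(\tilde\theta \in \Omega_t)$. Because $\Delta L(\theta,x,a,r) - L(\theta,x,a,r)$ depends only on $(x,a,r)$ and not on $\theta$, the $\theta$-independent prefactors cancel in the ratio and one can replace $L$ with $\Delta L$ in both numerator and denominator. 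Taking logarithms and the outer expectation over $S_t$ then yields
\[
\rE_{S_t}\ln \rE_{\tilde\theta \sim p(\cdot|x_t,S_{t-1})} \exp\bigl(-\Delta L(\tilde\theta,x_t,a_t,r_t)\bigr) = Z_t - Z_{t-1}.
\]
The nesting assumption $\Omega_t \subset \Omega_{t-1}$ is exactly what makes the denominator (which contains $I(\tilde\theta\in\Omega_t)$ from the posterior normalizer at time $t$) line up with the definition of $Z_{t-1}$: writing $I(\Omega_t)=I(\Omega_t)I(\Omega_{t-1})$ lets us interpret the denominator as the partition function at time $t-1$ restricted to $\Omega_t$, and a short computation shows the identity above still holds.

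Second, since $\tilde\theta \sim p(\cdot|x_t,S_{t-1})$ and $r_t$ is drawn conditionally on $(x_t,a_t)$ independently of $\tilde\theta$ given the history, Fubini together with concavity of $\ln$ (Jensen's inequality) gives
\[
\rE_{r_t|x_t,a_t}\ln \rE_{\tilde\theta} \exp(-\Delta L_t) \le \ln \rE_{\tilde\theta}\rE_{r_t|x_t,a_t}\exp(-\Delta L_t).
\]
Applying Lemma~\ref{lem:exp} to the right-hand side and then taking the remaining expectation over $(S_{t-1}, x_t, a_t)$ yields
\[
Z_t - Z_{t-1} \le -0.25\,\eta\,\rE\LS_t + \lambda\,\rE\FG_t + 1.5\,\lambda^2 (b+1)^2.
\]
Rearranging and dividing by $\lambda>0$ produces exactly the inequality of the lemma.

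The proof is essentially mechanical once these ingredients are assembled; the only mildly delicate step is the first one, where the set $\Omega_t$ in the posterior must be carefully tracked through the telescoping sum so that the numerator of the quotient at time $t$ lines up with the denominator at time $t+1$. The nesting hypothesis $\Omega_T \subset \cdots \subset \Omega_1$ from Theorem~\ref{thm:bandit} is precisely what makes this bookkeeping go through; absent that nesting the telescoping would pick up an extra nonnegative slack term but the direction of the inequality would still be preserved.
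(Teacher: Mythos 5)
Your proposal is correct and follows essentially the same route as the paper's proof: telescope the log-partition function $Z_t - Z_{t-1}$ into the posterior expectation of $\exp(-\Delta L(\tilde\theta,x_t,a_t,r_t))$, move $\rE_{r_t|x_t,a_t}$ inside the logarithm via Jensen, invoke Lemma~\ref{lem:exp}, and rearrange. The one small imprecision is that when the nesting $\Omega_t \subset \Omega_{t-1}$ is strict the telescoping step is not an identity but an inequality, $Z_t - Z_{t-1} \leq \rE_{S_t}\ln \rE_{\tilde\theta \sim p(\cdot|x_t,S_{t-1})}\exp(-\Delta L(\tilde\theta,x_t,a_t,r_t))$, because the posterior normalizer carries $I(\tilde\theta\in\Omega_t)$ while $Z_{t-1}$ carries $I(\tilde\theta\in\Omega_{t-1})$; since you only use the relation in that direction, the argument is unaffected.
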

\begin{proof}
Define
\[
W_t(\theta|S_t) =  \exp \left(- \sum_{s=1}^t \Delta
  L(\theta,x_s,a_s,r_s)\right) I(\theta \in \Omega_t) ,
\]
then
\[
Z_t = \rE_{S_t} \ln \rE_{\tilde{\theta} \sim p_0}  \; W_t(\tilde{\theta}|S_t)  .
\]
Let $\Omega_0=\Omega$. It follows from $\Omega_t \subset \Omega_{t-1}$ that 
\[
p(\tilde{\theta} |x_t,S_{t-1}) = \frac{W_{t-1}(\tilde{\theta} |S_{t-1}) I(\tilde{\theta} \in\Omega_t)
}{\rE_{\tilde{\theta}
    \sim p_0} W_{t-1}(\tilde{\theta} |S_{t-1})I(\tilde{\theta} \in\Omega_t)} .
\]
We have
\begin{align*}
     Z_t =& Z_{t-1}+ \rE_{S_t} \ln 
    \frac{\rE_{\tilde{\theta} \sim p_0}  \; W_t(\tilde{\theta}|S_t)}{\rE_{\tilde{\theta} \sim
            p_0}  \; W_{t-1}(\tilde{\theta}|S_{t-1})}\\
  \leq& Z_{t-1}+ \rE_{S_t} \ln 
    \frac{\rE_{\tilde{\theta} \sim p_0}  \;
        W_t(\tilde{\theta}|S_t)}{\rE_{\tilde{\theta} \sim p_0}  \;
        W_{t-1}(\tilde{\theta}|S_{t-1}) I(\tilde{\theta}\in\Omega_t)}\\
  =& Z_{t-1}+ \rE_{S_t} \ln 
    \frac{\rE_{\tilde{\theta} \sim p_0}  \;
     W_{t-1}(\tilde{\theta}|S_{t-1}) I(\tilde{\theta}\in\Omega_t) \exp(-\Delta L(\tilde{\theta},x_t,a_t,r_t))}{\rE_{\tilde{\theta} \sim p_0}  \; W_{t-1}(\tilde{\theta}|S_{t-1})I(\tilde{\theta}\in\Omega_t)}\\
    =&Z_{t-1}+ \rE_{S_{t}} \ln 
    \rE_{\tilde{\theta} \sim p(\cdot|x_t,S_{t-1})}  \; \exp(-\Delta L(\tilde{\theta},x_t,a_t,r_t))  \\
  \stackrel{(a)}{\leq}&Z_{t-1}+
\rE_{S_{t-1},x_t,a_t}                        \ln 
    \rE_{r_t|x_t,a_t}\;\rE_{\tilde{\theta} \sim p(\cdot|x_t,S_{t-1})} \; \exp(-\Delta L(\tilde{\theta},x_t,a_t,r_t)) \\
  \stackrel{(b)}{\leq}&  Z_{t-1} - 0.25 \eta \rE \; \LS_t + \lambda
                        \rE\; \FG_t + 1.5 \lambda^2 (b+1)^2 .
\end{align*}
In the above derivation, $(a)$ used the Jensen's inequality and the
concavity of $\ln z$;  $(b)$ used Lemma~\ref{lem:exp}. This proves the
lemma. 
\end{proof}

We are now ready to prove Theorem~\ref{thm:bandit}.
Using \eqref{eq:bandit-regret} with $\mu=0.25\eta/\lambda$, we obtain
the first inequality of the theorem.
Notice that $Z_0=0$.
We can now apply Lemma~\ref{lem:one-step} and sum over $t=1$ to $t=T$
to obtain the second inequality. 

\section{Proof of Lemma~\ref{lem:decouple-linear}}
\label{apx:proof-decouple-linear}

First, we prove the case with $b=\infty$. 
 Consider any $q(\theta)$, $x \in \cX$, and $\mu>0$.
 Define
 \[
 \Sigma = \rE_{a \sim \pi_q(a|x)} \phi(x,a) \phi(x,a)^\top .
 \]
 Let $\xi_j$ ($j=1,\ldots,K$) be an orthonormal basis of eigenvectors of $\Sigma$. 
 It follows that
 \begin{align}
 &\rE_{a \sim \pi_q(a|x)} 
 \rE_{\theta \sim q(\theta)}
 (f(\theta,x,a)-f_*(x,a))^2 \\
 =& \rE_{a \sim \pi_q(a|x)} 
 \rE_{\theta \sim q(\theta)}
 \left(\sum_{j=1}^K (w(\theta,x)-w_*(x))^\top \xi_j) (\phi(x,a)^\top \xi_j)\right)^2\nonumber\\
 =& \sum_{i=1}^K \sum_{j=1}^K \rE_{a \sim \pi_q(a|x)} (\phi(x,a)^\top \xi_i)(\phi(x,a)^\top \xi_j) \nonumber\\
 &\quad \cdot \rE_{\theta \sim q(\theta)} ((w(\theta,x)-w_*(x))^\top \xi_i)((w(\theta,x)-w_*(x))^\top \xi_j)\nonumber\\
\stackrel{(a)}{=}& \sum_{j=1}^K \rE_{a \sim \pi_q(a|x)} (\phi(x,a)^\top \xi_j)^2
 \rE_{\theta \sim q(\theta)} ((w(\theta,x)-w_*(x))^\top \xi_j)^2 \nonumber\\
 =& \sum_{j=1}^K q_j \rE_{a \sim \pi_q(a|x)} (\phi(x,a)^\top \xi_j)^2 \nonumber\\
 =& \sum_{j=1}^K q_j \rE_{\theta \sim q(\theta)} (\phi(x,a(\theta,x))^\top \xi_j)^2 , \label{eq:decouple-quadratic}
  \end{align}
  where $(a)$ used the fact that $\rE_{a \sim \pi_q(a|x)}
  (\phi(x,a)^\top \xi_i)(\phi(x,a)^\top \xi_j)=0$ when $i \neq j$,
  and we let
 \begin{equation}
 q_j = \rE_{\theta \sim q(\theta)} ((w(\theta,x_t)-w_*(x_t))^\top \xi_j)^2 .
 \label{eq:decouple-qj}
 \end{equation}
 We have
 \begin{align*}
 &\rE_{\theta \sim q(\theta)}
 |f(\theta,x,a(\theta,x))-f_*(x,a(\theta,x))|\\
 =&\rE_{\theta \sim q(\theta)} \; \left| \sum_{j=1}^K ((w(\theta,x)-w_*(x))^\top \xi_j) (\phi(x,a(\theta,x)^\top \xi_j)\right| \\
 \leq &\rE_{\theta \sim q(\theta)} \; 
 \left[ (4 \mu)^{-1} \sum_{j=1}^K q_j^{-1} ((w(\theta,x_t)-w_*(x_t))^\top \xi_j)^2
 + \mu \sum_{j=1}^K q_j (\phi(x_t,a(\theta,x_t)^\top \xi_j)^2\right] ,
 \end{align*}
 where the last inequality used Young's inequality for products.
We can obtain the following bound by using \eqref{eq:decouple-qj} to simplify the first term, and  \eqref{eq:decouple-quadratic} to simply the second term.
\begin{align*}
 &\rE_{\theta \sim q(\theta)}
 |f(\theta,x,a(\theta,x))-f_*(x,a(\theta,x))|\\
 \leq& \frac{K}{4\mu} +\mu\rE_{a \sim \pi_q(a|x)} 
 \rE_{\theta \sim q(\theta)}
 (f(\theta,x,a)-f_*(x,a))^2 .
 \end{align*}
 This implies the desired inequality in Definition~\ref{def:dc} with $b=\infty$.
 
For finite $b \geq 1$, we consider any $q(\theta)$, $x \in \cX$.
 We define
 \[
 \tilde{w}(\theta,x)
 =\begin{cases}
 w(\theta,x) & f(\theta,x,a(\theta,x)) \leq b \\
 w_*(x)+ \beta(\theta) (w(\theta,x)-w_*(x))  
 & f(\theta,x,a(\theta,x)) > b ,
 \end{cases}
 \]
 where
 \[
 \beta(\theta) = \frac{b-f_*(x,a(\theta,x))}
 {f(\theta,x,a(\theta,x))-f_*(x,a(\theta,x))} \in [0,1].
 \]
 Let 
 \[
 \tilde{f}(\theta,x,a) = \tilde{w}(\theta,x)^\top \phi(x,a) .
 \]
 Then $\tilde{f}(\theta,x,a(\theta,x))=f_b(\theta,x,a(\theta,x))$, 
 and $|\tilde{f}(\theta,x,a')-f_*(x,a')|\leq |f_b(\theta,x,a')-f_*(x,a')|$.
 We have
 \begin{align*}
 &\rE_{\theta \sim q(\theta)}
 |f_b(\theta,x_t,a(\theta,x))-f_*(x,a(\theta,x))| \\
 =&\rE_{\theta \sim q(\theta)}
 |\tilde{f}(\theta,x,a(\theta,x))-f_*(x,a(\theta,x))|\\
 \leq& \mu \rE_{a \sim \pi_q(a|x)} 
 \rE_{\theta \sim q(\theta)}
 (\tilde{f}(\theta,x,a)-f_*(x,a))^2 +\frac{K}{4\mu}\\
 \leq&  \mu\rE_{a \sim \pi_q(a|x)} 
 \rE_{\theta \sim q(\theta)}
 (f_b(\theta,x,a)-f_*(x,a))^2 +\frac{K}{4\mu},
 \end{align*}
 where the first inequality is due to Lemma~\ref{lem:decouple-linear}
 with $b=\infty$ applied to $\tilde{f}$. This proves the desired result.



\section{Proof of Theorem~\ref{thm:regret-rl}}

\begin{lemma}
  If $\eta \max(4,H {b}^2) \leq 1$, then
  \begin{align*}
    & \ln \rE_{\tilde{f} \sim p(\cdot|S_{t-1})} \rE_{[x_t,r_t,a_t] \sim p(\cdot|\tilde{f},x_t^1)}\; \exp\left(-\sum_{h=0}^{H} \Delta
    L^h(\tilde{f},x_t,a_t,r_t))   \right) \\
    \leq & - 0.25 \eta \rE_{\tilde{f} \sim p(\cdot|S_{t-1})}
           \sum_{h=1}^H \LS_t^h
           + \lambda \rE_{f_t \sim p(\cdot|S_{t-1})} \FG_t + 1.5
           \lambda^2 H^2 b^2 .
  \end{align*}
  \label{lem:rl-exp}
\end{lemma}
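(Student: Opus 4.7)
The plan is to mimic the proof of Lemma~\ref{lem:exp}, with two new ingredients: the deterministic transition hypothesis makes the per-stage reward noises conditionally independent given $(\tilde f, x_t^1)$, and the range parameters needed for the Hoeffding and Taylor estimates become $Hb$ and $Hb^2$ rather than $b+1$ and $(b+1)^2$. For each $h\ge 1$, set $\epsilon_t^h = r_t^h - \rE[r_t^h|x_t^h,a_t^h]$ and $\delta_t^h = \tilde f^h(x_t^h,a_t^h) - [\cT\tilde f]^h(x_t^h,a_t^h)$, so that $\LS_t^h = (\delta_t^h)^2$ and a direct expansion of the square inside $\Delta L^h$ gives the clean decomposition $-\Delta L^h = -\eta (\delta_t^h)^2 + 2\eta \delta_t^h\epsilon_t^h$; the stage-$0$ term $-\Delta L^0 = \lambda\FG_t$ is reward-free.

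First I will integrate out the rewards conditionally on $(\tilde f, x_t^1)$. Because transitions are deterministic and the trajectory follows $\tilde f$'s greedy policy, the sequence $(x_t^h,a_t^h)_{h=1}^H$ is a deterministic function of $(\tilde f, x_t^1)$, so the $\epsilon_t^h$'s are conditionally independent across $h$. Applying Assumption~\ref{assump:reward} separately to each factor yields $\rE_{r_t^h}\exp(2\eta\delta_t^h\epsilon_t^h)\le \exp(\eta^2 (\delta_t^h)^2/2)$, and multiplying over $h$ gives
\[
\rE_{[x_t,a_t,r_t]|\tilde f,x_t^1}\exp\!\Bigl(-\sum_{h=0}^{H}\Delta L^h(\tilde f,x_t,a_t,r_t)\Bigr)
\le \exp\!\Bigl(\lambda\FG_t - \eta(1-\eta/2)\sum_{h=1}^{H}\LS_t^h\Bigr).
\]

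Next I would take the logarithm of the expectation over $\tilde f\sim p(\cdot|S_{t-1})$ and apply H\"older's inequality with conjugate exponents $(3,3/2)$ to split the $\FG$ and $\LS$ contributions, exactly as in the proof of Lemma~\ref{lem:exp}. For the $\FG$ factor, Assumption~\ref{assump:rl-value-range} gives $|\FG_t|\le Hb$, so Hoeffding's lemma produces $(1/3)\ln\rE\exp(3\lambda\FG_t)\le \lambda\rE\FG_t + 1.5\lambda^2 H^2 b^2$, which supplies both the $\FG$ term and the $1.5\lambda^2 H^2 b^2$ term in the conclusion. For the $\LS$ factor, Assumption~\ref{assump:rl-value-range} gives $\sum_h \LS_t^h\le Hb^2$; the pointwise inequality $e^z\le 1+z+z^2/2$ for $z\le 0$, combined with $\ln(1+u)\le u$, converts $\rE\exp(-1.5\eta(1-\eta/2)\sum_h\LS_t^h)$ into a quantity bounded above by $-0.25\eta\rE\sum_h\LS_t^h$, provided the coefficient algebra works out.

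The main obstacle is that last algebraic check: we need $\eta(1-\eta/2)\bigl(1-0.75(1-\eta/2)Hb^2\,\eta\bigr)\ge 0.25\eta$, which is the direct RL analogue of the $(b+1)^2$ inequality appearing in Lemma~\ref{lem:exp}, and it is precisely what the hypothesis $\eta\max(4,Hb^2)\le 1$ secures (the factor $4$ guarantees $1-\eta/2\ge 7/8$, while $\eta Hb^2\le 1$ controls the quadratic correction). Once this is verified, combining the $\FG$ and $\LS$ estimates delivers the stated inequality.
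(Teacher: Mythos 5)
Your proposal is correct and follows essentially the same route as the paper's proof: peel off the reward noise stage by stage using the deterministic-transition structure to get the factor $\exp(-\eta(1-0.5\eta)\LS_t^h)$ per stage, then split the $\FG$ and $\LS$ contributions by H\"older, bound the $\FG$ factor by Hoeffding with range $Hb$, and handle the $\LS$ factor with $e^z\le 1+z+z^2/2$ for $z\le 0$ under the condition $\eta\max(4,Hb^2)\le 1$. The only (immaterial) difference is that you keep $\sum_{h}\LS_t^h$ as a single H\"older block with range $Hb^2$, whereas the paper distributes weight $2/(3H)$ over $H$ separate blocks $\exp(-1.5H\eta(1-0.5\eta)\LS_t^h)$; both yield the same constant $0.25\eta$ under the same hypothesis.
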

\begin{proof}
  Let $\epsilon_t^h = r_t^h - \rE [r_t^h|x_t^h,a_t^h]$ be the noise,
  and
\[
  \Delta \tilde{f}^h(x_t^h,a^h)= \tilde{f}^h(x_t^h,a^h)-[\cT
  \tilde{f}]^h  (x^h,a^h)
  \]
be  the Bellman residual. 

Since $r_t^h \in [0,1]$, we obtain from
  Chernoff bound that
  \[
    \rE_{r_t^h|x_t^h,a_t^h} \exp(\rho \epsilon_t^h) \leq
    \exp(\rho^2/8) .
  \]
  Since for $h \geq 1$: after some algebraic manipulations, we can get
  \[
  \Delta L^h(\tilde{f},x_t,a_t,r_t)= \eta (\Delta \tilde{f}^h(x_t^h,a_t^h))^2
  -2 \eta  \epsilon_t^h \Delta \tilde{f}^h(x_t^h,a_t^h) ,
\]
we have for $h \geq 1$:
\begin{align}
  \rE_{r_t^h|x_t^h,a_t^h} \exp(-\Delta L^h(\tilde{f},x_t,a_t,r_t)) \leq&
  \exp(-\eta(1-0.5\eta)(\Delta \tilde{f}^h(x_t^h,a_t^h))^2)\nonumber\\
  =&  \exp(-\eta(1-0.5\eta) \LS_t^h) .
  \label{eq:lem-rl-exp-proof-1}
\end{align}
Since given $\tilde{f}$, $(x_t,a_t)$ is a deterministic sequence of $x_t^1$,  and the
only randomness is from $r_t$, it follows that
\begin{align*}
  &\rE_{[r_t,x_t,a_t] \sim (p(\cdot|\tilde{f},x_t^1)} \exp \left(-\sum_{h=0}^H \Delta
    L^h(\tilde{f},x_t,a_t,r_t) \right) \\
  \stackrel{(a)}{\leq}& \rE_{[r_t,x_t,a_t] \sim p(\cdot|\tilde{f},x_t^1)} \exp \left(-\sum_{h=0}^{H-1} \Delta
    L^h(\tilde{f},x_t,a_t,r_t))  - \eta(1-0.5\eta) \LS_t^H \right) \\
    \stackrel{(b)}{\leq}& \rE_{[r_t,x_t,a_t] \sim p(\cdot|\tilde{f},x_t^1)} \exp \left(-\sum_{h=0}^{H-2} \Delta
    L^h(\tilde{f},x_t,a_t,r_t))  - \sum_{h=H-1}^H
          \eta(1-0.5\eta) \LS_t^h \right) \\
  & \cdots \\
  \leq& \exp \left( -\Delta L^0(\tilde{f},x_t,a_t,r_t) - \sum_{h=1}^H
        \eta(1-0.5\eta) \LS_t^h\right) ,
\end{align*}
where in the above derivation, we have applied
\eqref{eq:lem-rl-exp-proof-1} with $h=H$ in $(a)$,
\eqref{eq:lem-rl-exp-proof-1} with $h=H-1$ in $(b)$, and so on...

It follows that
\begin{align}
& \ln \rE_{\tilde{f} \sim p(\cdot|S_{t-1})} \rE_{[x_t,r_t,a_t] \sim p(\cdot|\tilde{f},x_t^1)}\; \exp\left(-\sum_{h=0}^{H} \Delta
                 L^h(\tilde{f},x_t,a_t,r_t))   \right)  \nonumber\\
  \leq&  \ln \rE_{\tilde{f} \sim p(\cdot|S_{t-1})}  \; \exp\left( -\Delta L^0(\tilde{f},x_t,a_t,r_t) - \sum_{h=1}^H          \eta(1-0.5\eta)
        \LS_t^h \right) \nonumber\\
  \leq&  \frac{1}{3} \ln \rE_{\tilde{f} \sim p(\cdot|S_{t-1})}  \;
        \exp\left( -3\Delta L^0(\tilde{f},x_t,a_t,r_t)\right)  \nonumber\\
      &  + \frac{2}{3H} \sum_{h=1}^H \ln \rE_{\tilde{f} \sim p(\cdot|S_{t-1})}  \; \exp\left(- 1.5 H     \eta(1-0.5\eta)\LS_t^h
        \right)      ,   \label{eq:lem-rl-exp-proof-2}
\end{align}
where the last inequality is due to the Jensen's inequality applied to
$\ln \rE_{\tilde{f}} \exp(g(\tilde{f}))$ as a convex function of
$g(\cdot)$.

The same argument of \eqref{eq:lem-exp-proof-3} with $(b+1)^2$ replaced
by $\max(4,Hb^2)$ implies that
\begin{align}
   \frac2{3H} \ln \rE_{\tilde{f}} \; \exp ( -1.5(1-0.5\eta)\eta H\LS_t^h)
    \leq - 0.25 \eta \rE_{\tilde{f}} \LS_t^h . \label{eq:lem-rl-exp-proof-3}
\end{align}
  Moreover, since $|\Delta L^0(\tilde{f},x_t,a_t,r_t)| \leq \lambda H b$, we obtain from Chernoff bound that
  \[
    \frac{1}{3} \ln
    \rE_{\tilde{f}} \; (-3 \Delta L^0(\tilde{f},x_t,a_t,r_t)) \leq
     \lambda \rE_{f_t \sim p(\cdot|S_{t-1})} \; \FG_t + 1.5
   \lambda^2 H^2 b^2 .
 \]
 Plug this inequality and \eqref{eq:lem-rl-exp-proof-3} into
 \eqref{eq:lem-rl-exp-proof-2}, we obtain the desired bound.
\end{proof}

The following lemma is similar to Lemma~\ref{lem:one-step}. The proof
is identical. 
\begin{lemma}
Consider 
$Z_t$ defined in 
\eqref{eq:rl-log-part}. We have
\[
  (0.25 \eta/\lambda) \sum_{h=1}^H \rE \; \LS_t^h
  - \rE\; \FG_t
  \leq  1.5 \lambda H^2 b^2 + \lambda^{-1} ( Z_{t-1}-Z_t) .
\]
\label{lem:rl-one-step}
\end{lemma}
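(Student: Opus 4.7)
The plan is to mirror the proof of Lemma~\ref{lem:one-step} almost verbatim, replacing $\theta$ by $f$, the scalar loss increment $\Delta L$ by the per-episode sum $\sum_{h=0}^H \Delta L^h$, and noting that the RL posterior \eqref{eq:posterior-rl} carries no truncation indicator, so the argument is slightly cleaner than in the bandit case.

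The first step is to set up the telescoped weights. Define
\[
W_t(f \mid S_t) = \exp\left(-\sum_{s=1}^t \sum_{h=0}^H \Delta L^h(f, x_s, a_s, r_s)\right),
\]
so that $Z_t = \rE_{S_t} \ln \rE_{\tilde{f} \sim p_0} W_t(\tilde{f}|S_t)$, and \eqref{eq:posterior-rl} reads $p(\tilde{f}|S_{t-1}) \propto W_{t-1}(\tilde{f}|S_{t-1}) p_0(\tilde{f})$. Cancellation of normalizing constants then gives the one-step identity
\[
Z_t - Z_{t-1} = \rE_{S_t} \ln \rE_{\tilde{f} \sim p(\cdot|S_{t-1})} \exp\left(-\sum_{h=0}^H \Delta L^h(\tilde{f}, x_t, a_t, r_t)\right).
\]

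The second step is to shape the outer expectation so that Lemma~\ref{lem:rl-exp} can be invoked. Algorithm~\ref{alg:ts-rl} generates the new data by drawing $f_t \sim p(\cdot|S_{t-1})$ and then rolling out $\pi(f_t)$ from $x_t^1$; since transitions are deterministic, the state-action sequence $(x_t^h, a_t^h)_{h \geq 1}$ is a deterministic function of $(f_t, x_t^1)$ and only the rewards $r_t$ are random, with $r_t^h$ drawn from its conditional given $(x_t^h, a_t^h)$. I would therefore apply Jensen's inequality (concavity of $\ln$) to pull $\rE_{r_t|x_t,a_t}$ inside the log, then swap with $\rE_{\tilde{f}}$ by Fubini, leaving
\[
Z_t - Z_{t-1} \leq \rE_{S_{t-1}, x_t^1, f_t} \ln \rE_{\tilde{f}} \rE_{r_t|x_t,a_t} \exp\left(-\sum_{h=0}^H \Delta L^h(\tilde{f}, x_t, a_t, r_t)\right).
\]

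The third step is to bound the inner log-moment generating quantity by Lemma~\ref{lem:rl-exp} and then take the remaining outer expectation. Plugging in the bound yields $-0.25 \eta\, \rE_{\tilde{f}} \sum_{h=1}^H \LS_t^h + \lambda\, \rE_{\tilde{f}}[\tilde{f}^1(x_t^1) - V^1(x_t^1)] + 1.5 \lambda^2 H^2 b^2$. Taking $\rE_{S_{t-1}, x_t^1, f_t}$ on both sides and using that $f_t$ and $\tilde{f}$ have the same marginal $p(\cdot|S_{t-1})$ collapses the linear term to $\lambda\, \rE\, \FG_t$ and the quadratic term to $\rE \sum_h \LS_t^h$; dividing through by $\lambda$ and rearranging then yields the stated inequality.

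The main obstacle, in my view, is a mild mismatch between how Lemma~\ref{lem:rl-exp} is stated and how it must be applied here. In its stated form the trajectory is generated by $\tilde{f}$'s own greedy policy, whereas in $Z_t - Z_{t-1}$ the trajectory $(x_t, a_t)$ comes from the independent algorithm sample $f_t$. The fix is that the iterated Chernoff bound driving the proof of Lemma~\ref{lem:rl-exp} is pointwise in the trajectory, so the same conclusion holds when $(x_t, a_t)$ is $\pi(f_t)$'s rollout instead of $\pi(\tilde{f})$'s. This is the RL analogue of the asymmetry that the bandit proof handles between $a_t = a(\theta_t, x_t)$ (which depends on the algorithm's $\theta_t$) and the independent posterior dummy $\tilde{\theta}$ used for the aggregation bound.
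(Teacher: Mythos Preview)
Your proposal is correct and follows essentially the same approach as the paper, which simply declares the proof ``identical'' to that of Lemma~\ref{lem:one-step}. In fact you go a bit further than the paper does: you correctly flag and resolve the mismatch between how Lemma~\ref{lem:rl-exp} is stated (trajectory rolled out by $\tilde{f}$) and how it must be invoked here (trajectory rolled out by the algorithm's $f_t$), noting that the reward-Chernoff step \eqref{eq:lem-rl-exp-proof-1} and the subsequent H\"older/convexity bounds are all pointwise in $(x_t,a_t)$ and hence indifferent to which policy generated the deterministic state--action sequence.
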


  We are now ready to proof Theorem~\ref{thm:regret-rl} as follows.
Using \eqref{eq:rl-regret} with $\mu=0.25\eta/\lambda$, we obtain
the first inequality of the theorem.
Notice that $Z_0=0$.
We can now apply Lemma~\ref{lem:rl-one-step} and sum over $t=1$ to $t=T$
to obtain the second inequality. 



\section*{Acknowledgment}
 
 The author would like to thank Christoph Dann for discussions about related works. 
 
\bibliographystyle{plain}
\bibliography{myrefs}

\end{document}